\newtheorem{definition}{Definition}
\newtheorem{proposition}{Proposition}
\newtheorem{remark}{Remark}
\newtheorem{theorem}{Theorem}
\newtheorem{proc}{Procedure}
\title{3D Well-composed Polyhedral Complexes}
\author{Rocio Gonzalez-Diaz$^1$,
Maria-Jose Jimenez$^1$,
Belen Medrano
%\footnote{Authors partially supported under grant MTM2012-32706.
%\\ Authors listed alphabetically.}
\\
\small Applied Math Department, University of Seville, \\
\small Av. Reina Mercedes, s/n, Seville, Spain\\
\small e-mails: $\{$rogodi,majiro,belenmg$\}$@us.es}
\begin{document}
\maketitle

\begin{abstract}
A binary three-dimensional (3D) image $I$ is well-composed if the boundary surface of its continuous analog is a 2D manifold.
Since 3D images are not often well-composed, there are several voxel-based methods (``repairing" algorithms) for turning them into well-composed ones
but these methods either do not guarantee the topological equivalence between the original image and its corresponding well-composed one or involve sub-sampling the whole image. 
 In this paper, we present a method to locally ``repair" the cubical complex $Q(I)$ (embedded in $\mathbb{R}^3$) associated to $I$ to obtain a
 polyhedral complex $P(I)$ homotopy equivalent to $Q(I)$ such that the boundary of every connected component of $P(I)$ is a 2D manifold.
The reparation is performed via a new codification system for $P(I)$ under the form of a 3D grayscale image that allows an efficient access to cells
and their faces.

Keywords: binary 3D image, well-composedness, cubical complex, polyhedral complex, homotopy equivalence, 2D manifold
\end{abstract}

\section{Introduction}

3D well-composed images \cite{Lat97} enjoy important topological and geometrical properties in such a way that several algorithms used in computer vision, computer graphics and image processing are simpler.
For example, 
thinning algorithms can be simplified and naturally made parallel if the input image is well-composed \cite{Lat95,MAM04}; some algorithms for computing surface curvature or extracting adaptive triangulated surfaces assume that the input image is well-composed \cite{KL04}.
However, our main motivation is that of (co)homology computations on the cell complex representing the 3D image \cite{GJM10,iwcia2011}. We could take advantage of a well-composed-like representation since computations could be performed only on the boundary subcomplex.

 Since 3D images are often not well-composed, there are several methods (repairing algorithms) for turning them into well-composed ones 
\cite{Lat98,SLTGG08},
but these methods do not guarantee the topological equivalence between the original and its corresponding well-composed image. In fact, the purpose may even be to simplify as much as possible the topology (in the sense of removing little topological artifacts). However, we are concerned with the fact of preserving the topology of the input image having in mind cases in which subtle details may be important.

In \cite{SLS07}, the authors provide a solution to the problem of topology preservation during digitization of 3D objects. They use several reconstruction methods that all result in a 3D object with a 2D manifold surface. More specifically, one of the proposed methods is a voxel-based method called Majority Interpolation, by which resolution is doubled in any direction and new sampling voxels are added to the foreground under some constraints. 
Other method is based on the most common reconstruction methods for 3D digital images is the marching cube (MC) algorithm \cite{marchingcube} which analyzes local configurations of eight neighboring sampling points in order to reconstruct a polygonal surface. 
There even exists a MC variant, called  asymmetric marching cubes, which generates the reconstruction of  manifold surfaces (see \cite[page 101]{Ste08}). 
A different approach is made in \cite{LM00}, where the authors create a polyhedral complex as the continuous analog of a set of voxels with given digital adjacencies. They also show that such a continuous analog corresponds to the usual definition of iso-surface in the $3D$ case. 

In our approach, we first consider a  a cubical complex $Q(I)$ associated to a
voxel-based representation of the given image $I$. 
Then, we develop a new scheme of representation, called ExtendedCubeMap (ECM) representation, based on a 3D grayscale image storing the cells and the boundary face relations of cells of $Q(I)$.
 Working on an ECM representation of $Q(I)$, we
 design a procedure to obtain a polyhedral complex $P(I)$
homotopy equivalent to $Q(I)$. The importance of our method is that the cells of   $P(I)$ are totally encoded in a 3D grayscale image $g_P$, 
 and their boundary face relations in a set of structuring elements $B_P$. It is worth to mention that the set $B_P$ remains the same for any polyhedral complex $P(I)$ computed using our method.

In our prequel paper \cite{dgci}, the complex $P(I)$ homotopy equivalent to $Q(I)$, was a cell complex constructed with more general building blocks than polyhedra depending on the local configuration of voxels. In this paper, $P(I)$ is always a polyhedral complex, constructed with a general procedure (that is valid for all the local configurations), with the advantage that it can be stored in a matrix form (a 3D grayscale image) in a way that we do not need to build $P(I)$ to obtain face relations between its cells. 

Section~\ref{polyhedral} is devoted to clarify, first, the correspondence between 3D binary digital images and cubical complexes; the notion of well-composedness is also introduced as well as its extension to complete polyhedral complexes. Section~\ref{representation} describes a new codification system 
called ECM representation
of cubical complexes which is 
also valid for other more general polyhedral complexes
as we will see later in this paper.
 Section~\ref{repairing} describes the repairing algorithm to get a well-composed polyhedral complex, $P(I)$, starting from the cubical complex $Q(I)$ associated to a non-well-composed image $I$.
 The repairing process is, in fact, accomplished on the ECM representation $E_Q$ of $Q(I)$ to get the ECM representation $E_P$ of $P(I)$. 
Finally, we draw some conclusions and statements for future work in the last section.

\section{3D Images, Polyhedral Complexes and Well-composedness}\label{polyhedral}

Consider $\mathbb{Z}^3$ as the set of points with integer coordinates in 3D space $\mathbb{R}^3$.
A {\it 3D binary digital image} (or 3D image for short)
is a set  $I=(\mathbb{Z}^3,26,6,B)$ (or $I=(\mathbb{Z}^3,B)$, for short), where $B\subset \mathbb{Z}^3$ is the {\itshape foreground}, $B^c=\mathbb{Z}^3 \backslash B$ the {\itshape background}, and $(26,6)$ is the adjacency relation for the foreground and background, respectively.
A point of $\mathbb{Z}^3$ can be interpreted as a unit closed cube (called {\it voxel}) in $\mathbb{R}^3$ centered at the point and with faces parallel to the coordinate planes.
The set of voxels centered at the points of $B$ in $\mathbb{R}^3$ is called the {\it continuous analog} of $I$ and it is denoted by $C(I)$. The {\it boundary surface} of  $C(I)$ is the set of points in $\mathbb{R}^3$ that are shared by the voxels centered at points of $B$ and those centered at points of $B^c$ (see \cite{artzy,herman,rosenfeld}).

Recall that a 3D image $I=(\mathbb{Z}^3,B)$ is {\it well-composed} \cite{Lat97} if the {\it boundary surface} of  $C(I)$
 is a 2D manifold,
i.e. each point  has a neighborhood homeomorphic to $\mathbb{R}^2$
 (it ``looks" locally like a planar open set).
The set of voxels of $C(I)$, together with all their faces (squares, edges and vertices) and the coface relationship between them, constitute a combinatorial structure called {\it cubical complex},
denoted by $Q(I)$
whose geometric realization is exactly $C(I)$.
The topology of $Q(I)$ reflects the topology of $I$ whenever $(26,6)-$adjacency is considered on $I$.
This way, we will say that the cubical complex $Q(I)$ associated to a 3D image $I$ is well-composed if the corresponding image $I$ is well-composed.
Considering configurations of $8$ cubes sharing a vertex, one finds
eleven different $8-$cube configurations (modulo reflections and $90-$degree rotations) around a critical vertex,
 as showed in  Fig.~\ref{critical_conf}, where well-composedness condition is not satisfied.
For the eleven $8-$cube configurations, the central vertex $v$ is a {\it critical vertex} in the sense that it does not have a neighborhood in the boundary surface of $Q(I)$,
homeomorphic to $\mathbb{R}^2$.
 These eleven $8-$cube configurations come exactly from the $(2\times 2\times 2)-$configurations which contain one of the just two configurations being presented in \cite[Fig. 3]{Lat97}.
A cubical complex is a specific type of {\it polyhedral complex} (see \cite{koslov}). A  polyhedral complex $K$ is a combinatorial structure by which a space is decomposed into vertices, edges, polygons  and polyhedra (cells, in general) that are glued together by their boundaries such that the intersection of any two cells of the complex is also a cell of the complex.
Notice that the structure of a  polyhedral complex $K$ can be considered as purely combinatorial (i.e., a set of cells with coface relations between them), but in this work, it is associated to a specific geometric realization of $K$ in $\mathbb{R}^3$.

The dimension of an $i-$cell $\sigma\in K$ is $dim(\sigma)=i$. A cell $\mu\in K$ is a {\it face} of a cell $\sigma\in K$ if $\mu$ lies in the boundary of $\sigma$ and $dim(\mu)\leq dim(\sigma)$. The cell $\sigma$ is called a {\it coface} of $\mu$. A cell $\mu$ is {\it maximal} if it is not a face of any other cell $\sigma\in K$ and {\it free} if it is face of exactly one maximal cell of $K$. 
 The {\it boundary subcomplex} $\partial K$ is made up by the free cells of $K$, together with all their faces. 
The {\it closure} of a subset $S$ of $K$ is the smallest  subcomplex of $K$ that contains each cell in $S$. It is obtained by repeatedly adding to $S$ each face of every cell in $S$. The {\it star of $S$} (denoted $St\, S$) is the set of all cells in $K$ that have any faces in $S$.  
Note that the star is generally not a cell complex itself. 
The {\it link of $S$} (denoted by $Lk\, S$)  is the closure of the star of $S$ minus the star of all faces of $S$.
$K$ is an $n$D polyhedral complex if for any cell $\sigma\in K$, $dim(\sigma)\leq n$ and for some maximal cell $\mu\in K$, $dim(\mu)=n$.
An $n$D polyhedral complex is {\it complete} if all its maximal cells have dimension $n$.

Observe that if $K$ is  complete, so it is $\partial K$.
In particular, the cubical complex associated to an image $I$, $Q(I)$, and its boundary subcomplex, $\partial Q(I)$, are, respectively, 3D and 2D complete polyhedral complexes.

\begin{figure}[t!]
	\centering
		\includegraphics[width=13.7cm]{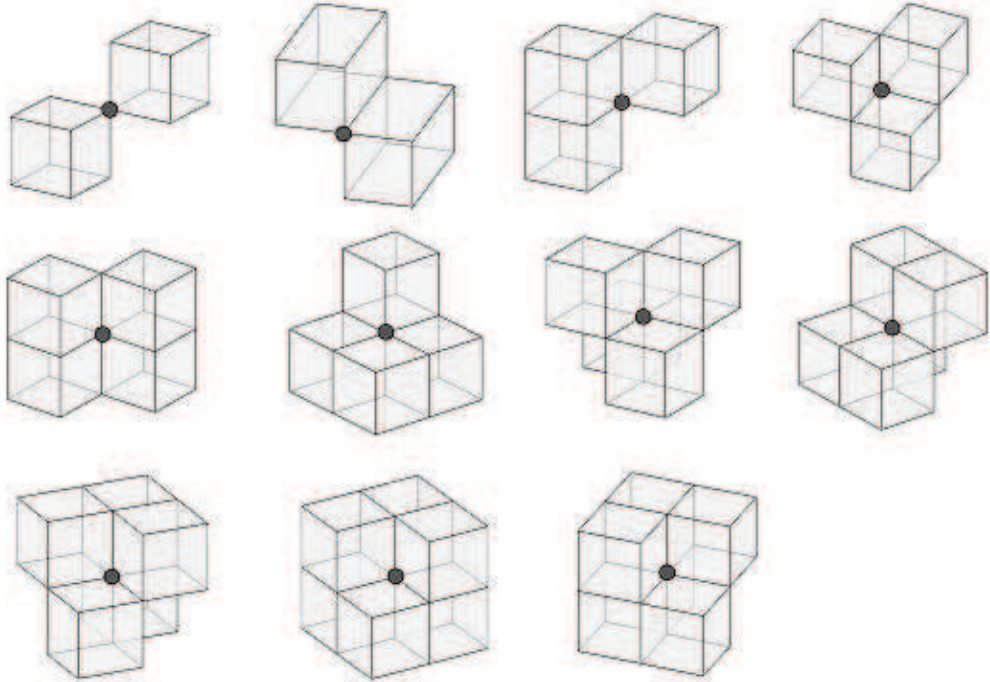}
	\caption{The eleven $8-$cube configurations around a critical vertex.}
	\label{critical_conf}
\end{figure}

The following definition extends the notion of well-composedness to 3D complete polyhedral complexes.

\begin{definition}\label{extended-well-composed}
A 3D complete polyhedral complex $K$ embedded in $\mathbb{R}^3$ is {\it well-composed} if 
the subcomplex $\partial K$ is
a disconnected union of 2D manifolds.
\end{definition}

\begin{proposition}\label{E1E2} A 3D complete polyhedral complex $K$ is  well-composed if the subcomplex $\partial K$ satisfies that:
\begin{itemize}
\item [(E1)] any edge has exactly two $2$-cofaces in $\partial K$;
\item [(E2)] for any vertex $v\in\partial K$, $Lk\,\{v\}$ in $\partial K$ has exactly one connected component.
\end{itemize}
\end{proposition}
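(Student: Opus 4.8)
The plan is to verify directly that conditions (E1) and (E2) force $\partial K$ to be, at every one of its points, locally homeomorphic to $\mathbb{R}^2$; since $\partial K$ is a complete $2$D polyhedral complex, this makes each of its connected components a $2$-manifold, which is exactly Definition~\ref{extended-well-composed}. Every point $x\in\partial K$ lies in the relative interior of a unique cell $\tau$ with $\dim\tau\in\{0,1,2\}$, and I would argue according to this dimension. If $\dim\tau=2$, the relative interior of the polygon $\tau$ is an open neighborhood of $x$ homeomorphic to an open subset of $\mathbb{R}^2$, and by the intersection axiom of polyhedral complexes no other cell of $\partial K$ meets it. If $\dim\tau=1$, say $\tau=e$, then by (E1) the edge $e$ has exactly two $2$-cofaces $\sigma_1,\sigma_2$ in $\partial K$; they are distinct and intersect precisely in $e$, so a small neighborhood of $x$ is obtained by gluing a half-disk of $\sigma_1$ to a half-disk of $\sigma_2$ along a subsegment of $e$ through $x$, which is again an open disk. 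These two cases are routine.

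The case $\dim\tau=0$, say $\tau=\{v\}$, is the heart of the argument. I would first reduce the local picture at $v$ to its link: cutting $\partial K$ with a small sphere $S$ centred at $v$ gives a graph $L=\partial K\cap S$, and since every cell of $\partial K$ through $v$ is star-shaped with respect to $v$, a small neighborhood of $v$ in $\partial K$ is homeomorphic to the cone with apex $v$ over $L$; hence it suffices to show $L\cong S^1$, for then that cone is a disk with $v$ at its centre. Now $L$ is built from the edges and $2$-cells of $\partial K$ incident to $v$: each such edge $e$ contributes one vertex $e\cap S$ of $L$, and each such $2$-cell $\sigma$, being a polygon with $v$ as a vertex, contributes one arc of $L$ joining the vertices coming from the two edges of $\sigma$ at $v$. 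Consequently the degree in $L$ of the vertex coming from an edge $e$ equals the number of $2$-cells of $\partial K$ having $e$ as a face, which is exactly $2$ by (E1). A finite nonempty graph in which every vertex has degree $2$ is a disjoint union of cycles, and $L$ is nonempty because $v$ lies in some $2$-cell of the complete complex $\partial K$; moreover $L$ is homeomorphic to the combinatorial link $Lk\,\{v\}$ of the statement, which differs from it only by the degree-$2$ vertices subdividing its arcs at those vertices of the incident polygons that are not adjacent to $v$. Therefore (E2) forces $L$ to have a single connected component, so $L$ is a single cycle, i.e. $L\cong S^1$, and the proof concludes once the three cases are combined.

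The step I expect to cost the most care is the vertex case, and within it two points: (a) the reduction ``neighborhood of $v$ $\cong$ cone over the link'' must be justified in the genuinely polyhedral (not merely simplicial) setting, using that each cell through $v$ is a convex cell with $v$ as a vertex; and (b) one must check that the combinatorial link $Lk\,\{v\}$ appearing in (E2) really has the same connected components as the geometric link $L$ on which the degree count is performed. Both rest on the defining intersection property of polyhedral complexes — two cells meet in a common face — which in particular rules out an edge of $K$ joining two vertices of one of its polygons that are not already adjacent in that polygon, and hence prevents the ``far'' vertices of a polygon at $v$ from being identified in $L$ with anything other than the degree-$2$ subdivision points of its arcs.
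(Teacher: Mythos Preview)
Your proposal is correct and follows essentially the same route as the paper: a case split on whether the point lies in the interior of a $2$-cell, on an edge, or at a vertex, invoking (E1) for the edge case and (E1)+(E2) for the vertex case. The paper's own proof is extremely terse---it simply asserts that the star of a vertex is a disk ``thanks to conditions $E1$ and $E2$''---whereas you have actually supplied the mechanism (cone over the link, the link being a $2$-regular graph by (E1) and connected by (E2), hence a single cycle), so your write-up is a fleshed-out version of the same argument rather than a different one.
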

\begin{proof}
Take a point $p\in \partial K$. Recall that $\partial K$ is made up by a set of $2-$cells, together with all their faces. If $p$ lies  inside a $2-$cell in $\partial K$, obviously the proposition holds; if $p$ lies on an edge $e\in\partial K$, condition $E1$ applies and hence the proposition holds, too; if $p$ is a vertex of $\partial K$, the star of $p$ in $\partial K$, is homeomorphic to a disk (thanks to conditions $E1$ and $E2$).
\end{proof}
Notice that conditions $E1$ and $E2$ are not satisfied (simultaneously) in the configurations showed in Fig.~\ref{critical_conf}.
\begin{definition}
Given a 3D complete polyhedral complex $K$ embedded in $\mathbb{R}^3$, a vertex $v$ is {\it critical} if either $E1$ fails for some $1-$coface of $v$ or $E2$ is not satisfied by $v$. Then, $K$ is well-composed if there is not any critical vertex in $\partial K$.
\end{definition}

An alternative definition of well-composedness has been presented in \cite[page 11]{Ste08} based on adjacency of maximal cells. In this paper, we use the one involving the notion of critical vertex given above since the repairing process is done on that vertices and their stars.

\section{ExtendedCubeMap Representation}\label{representation}

In \cite{chao}, the authors presented a data-structure (called \textit{CubeMap representation}) designed to compactly store and quickly manipulate cubical complexes. A similar structure was first introduced in CAPD library \cite{capd} for computing cubical homology. In that representation,
 for each cube (voxel), vertices, edges, squares and the cube itself were encoded in a $3\times 3\times 3$ array with values representing the dimension of each cell (see Fig.~\ref{digimagecell3D}.c). For a set of voxels, the corresponding array is composed by overlapping copies of arrays of size $3\times 3\times 3$. 
The CubeMap representation is, in fact, a 3D generalization of the Khalimsky grid, which has already been introduced and extensively used by V. Kovalevsky (e.g. in \cite{Kov08}).

Inspired by that representation scheme, we define a new data-structure, called ExtendedCubeMap representation, that allows to store not only 3D cubical complexes but also the new polyhedral complexes that will be constructed. This new codification is still presented under a 3D array form, also encoding the dimension of the cells that are represented (that is, a 3D grayscale image). In such a structure, the information of boundary relations between represented cells will be extracted by searching for certain structuring elements inside the representation.

An {\it $n$D grayscale image} is a map $g:\mathbb{Z}^n\to \mathbb{Z}$. Given a point $p\in \mathbb{Z}^n$, $g(p)$ is referred to as the {\it color} of $p$. 
A {\it structuring element} is also an $n$D grayscale image $b:D_b\subseteq \mathbb{Z}^n\to \mathbb{Z}$ 
whose domain contains the origin 
$o\in D_b$. 
A structuring element will be  used to perform a given operation around a certain neighborhood of a point.

\begin{definition}\label{ECM}
Given an $n$D complete polyhedral complex $K$,  an {\it ExtendedCubeMap
(ECM)} representation of $K$ is a triple $E_K=(h_K,g_K,B_K)$ where:
\begin{itemize}
\item 
 $h_K: D_K\to K$ 
is a 
bijective
 function,
for a certain domain $D_K\subset \mathbb{Z}^n$ with as many points as cells in $K$. For each cell $\sigma$, denote by $p_\sigma$ the point $h^{-1}_K(\sigma)$ representing $\sigma$;
\item $g_K:\mathbb{Z}^n\to \{-1,0,1,\dots,n\}$ is an $n$D grayscale image,
such that:
\begin{itemize}
\item $g_K(p)=dim(h_K(p))$, for any $p\in D_K$, .
\item $g_K(p)=-1$,  For any $p\in \mathbb{Z}^n\setminus D_K$.
\end{itemize}
\item $B_K$ is a set of structuring elements $\{b:D_b\subset\mathbb{Z}^n \to \mathbb{Z}\}$
such that for any $i-$cell $\sigma\in K$, there exists a single structuring element $b_{\sigma}: D_{\sigma}\to\mathbb{Z}$ in $B_K$ such that:
\begin{itemize}
\item $b_\sigma(p)=g_K(p+p_\sigma)$,  for any $p\in D_\sigma$. In particular, $b_{\sigma}(o)=i$;
\item If $p\in D_{\sigma}$ and $p\neq o$ then either $b_{\sigma}(p)=i-1$ or $b_{\sigma}(p)=-1$;
\item 
A point $p\in \mathbb{Z}^3$ 
satisfies that $p-p_{\sigma}\in D_\sigma$ and $b_\sigma(p-p_{\sigma})=i-1$ if and only if $p$ represents a cell $\mu$
(that is, $p=h_K^{-1}(\mu)$) which is an $(i-1)-$face of $\sigma$.
\end{itemize}
\end{itemize}
\end{definition}

\begin{remark}\label{colors}
In order to visualize examples of ECM representations, from now on, a point $p\in \mathbb{Z}^3$ is colored black if $g_Q(p)=3$, green if $g_Q(p)=2$, red if $g_Q(p)=1$, blue if $g_Q(p)=0$ and white if $g_Q(p)=-1$. 
For a better understanding of the pictures, the reader is referred to the web version of this paper. \end{remark}

An ECM representation of $K$ may not exist and may not be unique (see, for example, Fig.~\ref{digimagecell3D}.(c,d) which shows 
two different ECM representations of a single cube together with all its faces). Nevertheless, given a cubical complex $Q(I)$ and the ECM representation $E_Q$ of $Q(I)$ given in Prop.~\ref{EQ}, the procedures described in the next section will provide a unique polyhedral complex $P(I)$ as well as a unique ECM representation $E_P$ of $P(I)$.

\begin{figure}[t!]
	\centering
		\includegraphics[width=12cm]{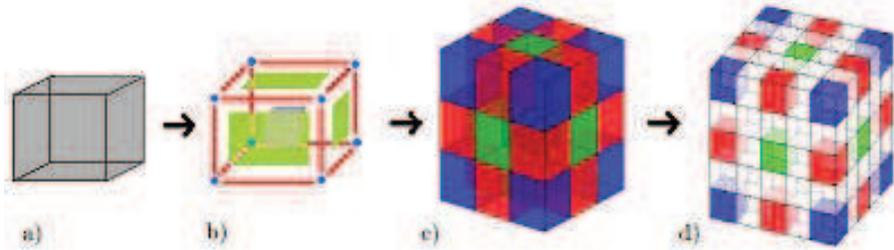}
	\caption{a) A single voxel $I$; b) the  cubical complex $Q(I)$ associated to $I$; c) 
The color values in the CubeMap representation of $Q(I)$ which is also an ECM representation of $Q(I)$; d) 
The color values in the ECM representation $E_Q$ of $Q(I)$. In both c) and d) the central voxel is colored black.}
	\label{digimagecell3D}
\end{figure}

Since the map $h_K$ is bijective, each cell $\sigma\in K$ is represented by a single point $p_\sigma=h^{-1}_K(\sigma) \in \mathbb{Z}^n$. 
Moreover, 
for each $p\in D_K$, $g_K(p)$ codifies the dimension of the cell in $K$ that $p$ represents (that is, the cell $\sigma=h_K(p)$). 
Finally, observe that the structuring element $b_\sigma:D_{\sigma}\to \mathbb{Z}$ associated to a cell $\sigma \in K$ provides a codification of the boundary face relations for $\sigma$.
Therefore,  within this codification system, the extraction of boundary faces of any cell $\sigma$ can be done by checking 
if any of the  structuring elements of $B_K$ fits around the point $p_{\sigma}$
(that is, the point in $D_K$ that represents $\sigma$). This last operation could be seen as a morphological erosion of the 3D grayscale image $g_K$ by the corresponding structuring element.

\begin{definition}
Given an ECM representation $E_K=(h_K,g_K,B_K)$ of a polyhedral complex $K$, a structuring element $b: D_b\to \mathbb{Z}$ is said to {\it fit around a point} 
$p\in D_K\subset \mathbb{Z}^n$ if, for any $q\in D_b$, $g_K(p+q)=b(q)$.
\end{definition}

Fig.~\ref{structelem} shows an example of a polyhedral complex $K$, 
the color values  in an ECM representation $E_K$ of $K$
 and the structuring elements (modulo $90-$degree rotations) associated to the cells of $K$.

\begin{figure}[t!]
	\centering
		\includegraphics[width=3.5cm]{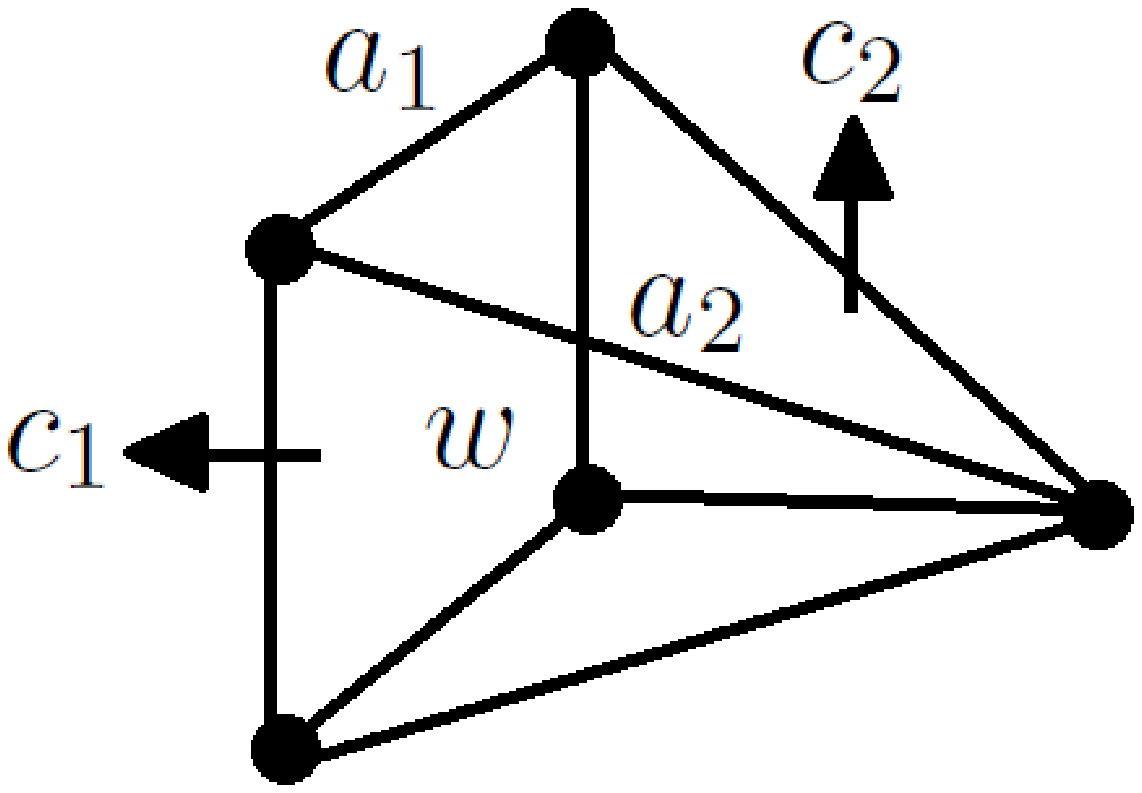}\\[3mm]
    \includegraphics[width=13.5cm]{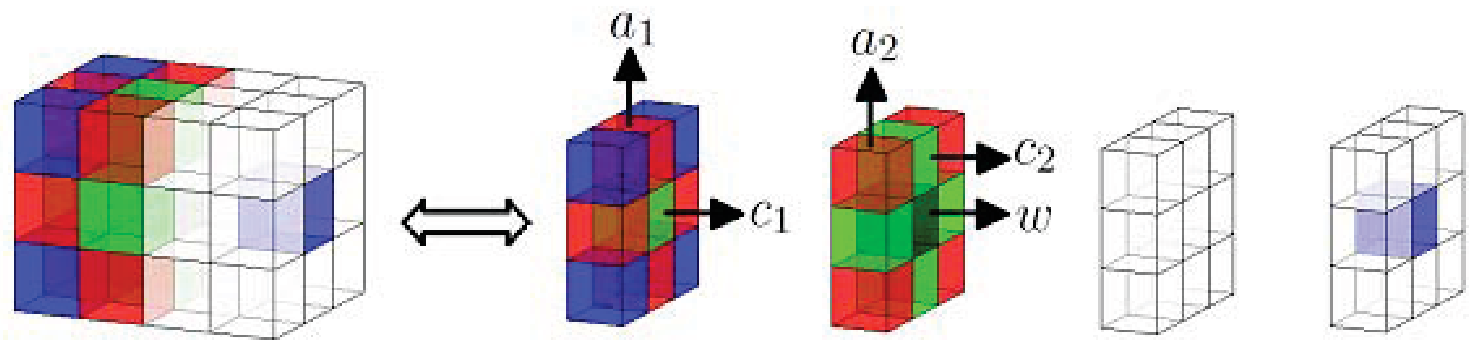}\\[3mm]
    \includegraphics[width=13.5cm]{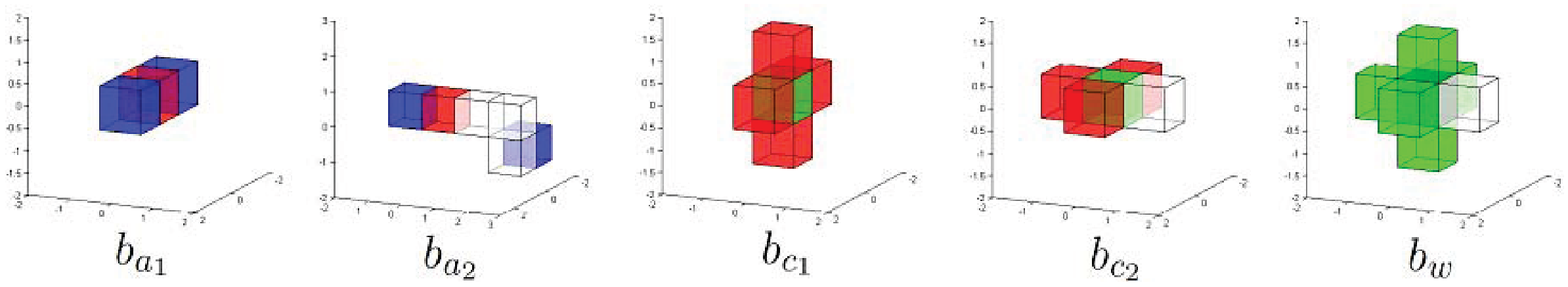}
	\caption{Top: A polyhedral complex $K$ (a solid pyramid). Middle:
The color values of the voxels in an ECM representation of $K$. 
Bottom: Structuring elements 
(modulo $90-$degree rotations) 
 associated to the cells of $K$.}
	\label{structelem}
\end{figure}

 The following result provides an ECM representation of a cubical complex $Q(I)$ associated to a 3D image $I$. Let $r_{\sigma}$ be the barycenter of $\sigma\in Q(I)$ and $D_Q=\{p\in \mathbb{Z}^3$ such that
$p=4 r_{\sigma}$ for some $\sigma\in Q(I)\}$. 

\begin{proposition}\label{EQ}
Let $Q(I)$ be the cubical complex associated to a 3D image $I$, the triple $E_Q=(h_Q,g_Q,B_Q)$ described below is an ECM representation of $Q(I)$.
\begin{itemize}
\item  $h_Q:D_Q\to Q(I)$, where:
\begin{itemize}
\item $D_Q=\{p\in \mathbb{Z}^3$ such that $p=4 r_{\sigma}$ for some $\sigma\in Q(I)\}$;
\item $h_Q(4 r_{\sigma})=\sigma$ for $\sigma\in Q(I)$;
\end{itemize} 
\item $g_Q:\mathbb{Z}^3\to \{-1,0,1,2,3\}$ is given by 
$g_Q(p)=dim (\sigma)$ if $p=4 r_{\sigma}\in D_Q$,
and  $g_Q(p)=-1$ otherwise. See Fig.~\ref{digimagecell3D}.d;
\item $B_Q=\{b_{\ell}:D_{\ell}\to \mathbb{Z}\}_{\ell=1,2,3}$,
where $b_{\ell}:D_{\ell}\to \mathbb{Z}$ (modulo $90-$degree rotations) is given by:
\begin{itemize}
\item  $D_1=\{o,(\pm 1,0,0),(\pm 2,0,0)\}$, $b_1(o)=1$, 
$b_1(p)=0$ for $p=(\pm 2,0,0)$ and $b_1(p)=0$ otherwise.
 See Fig.~\ref{rojo-azul}.x;
\item $D_2=\{o,(\pm 1,0,0),(0,\pm 1,0),(\pm 2,0,0),(0,\pm 2,0)\}$, $b_2(o)=2$,\\
 $b_2(p)=1$ for $p=(\pm 2,0,0),(0,\pm 2,0)$ and $b_2(p)=-1$ otherwise.
See Fig.~\ref{verde-rojo}.x; 
\item $D_3=
\{o,(\pm 1,0,0)$, $(0,\pm 1,0)$, $(0,0,\pm 1)$, $(\pm 2,0,0),(0,\pm 2,0)$,\\ $(0,0,\pm 2)\}$,
$b_3(o)=3$,  $b_3(p)=2$ for $p=(\pm 2,0,0),(0,\pm 2,0)$, $(0,0,\pm 2)$ and $b_3(p)=-1$ otherwise. See Fig.~\ref{negro-verde}.x.
\end{itemize}
\end{itemize} 
\end{proposition}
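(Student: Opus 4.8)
The plan is to check, in order, the three requirements of Definition~\ref{ECM}: that $h_Q$ is a bijection onto $Q(I)$ and that $D_Q$ has exactly as many points as $Q(I)$ has cells; that $g_Q$ takes the two prescribed values; and --- the only part with real content --- that for every cell $\sigma$ some $90$-degree rotation of one of $b_1,b_2,b_3$ can serve as the structuring element $b_\sigma$.

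First I would record the elementary geometric fact that pins down the grid $D_Q$. For a $k$-cell $\sigma$ of the cubical decomposition of $\mathbb{R}^3$ into unit cubes (of which $Q(I)$ is a subcomplex), exactly $k$ of the coordinates of the barycenter $r_\sigma$ are integers and the remaining $3-k$ are half-integers; hence $4r_\sigma\in 2\mathbb{Z}^3$, its entries are $\equiv 0\pmod 4$ in $k$ places and $\equiv 2\pmod 4$ in the other $3-k$, and $\dim\sigma=3-\#\{\,i : (4r_\sigma)_i\equiv 2\pmod 4\,\}$. Distinct cells have distinct barycenters, so $\sigma\mapsto 4r_\sigma$ is injective; therefore $h_Q$ is a bijection of $D_Q$ onto $Q(I)$ and $\#D_Q=\#Q(I)$. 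The two properties of $g_Q$ are then immediate from its definition, since on $D_Q$ it returns $\dim h_Q(p)$ and off $D_Q$ it returns $-1$. A by-product used repeatedly below is that any point of $\mathbb{Z}^3$ with an odd coordinate lies outside $D_Q$ and is therefore colored $-1$.

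The main step is the verification of $B_Q$. Fix an $i$-cell $\sigma$ with $i\in\{1,2,3\}$, set $p_\sigma=4r_\sigma$, and let $J\subseteq\{1,2,3\}$ be the set of the $i$ coordinate directions in which $\sigma$ is free (those in which $r_\sigma$ is an integer). The incidence fact I would invoke is that the $(i-1)$-faces of $\sigma$ are exactly the cells obtained by collapsing $\sigma$ to one of its two sides along a direction $j\in J$; their barycenters are $r_\sigma\pm\frac12 e_j$, so they are represented by the $2i$ points $p_\sigma\pm 2e_j$ with $j\in J$, and each such point does belong to $D_Q$ because $Q(I)$ contains all faces of its cells. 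Choosing $b_\sigma$ to be the rotation of $b_i$ whose unit and double steps lie along the directions of $J$ (for $i=3$ there is nothing to rotate), I would then check the three bullets of Definition~\ref{ECM}: $b_\sigma(o)=i=g_Q(p_\sigma)$; for $j\in J$ the point $p_\sigma\pm e_j$ has an odd coordinate so $g_Q(p_\sigma\pm e_j)=-1$, matching the unit-step entry of $b_i$; and $g_Q(p_\sigma\pm 2e_j)=i-1$ by the incidence fact, matching the double-step entry. Thus $b_\sigma(q)=g_Q(q+p_\sigma)$ on all of $D_\sigma$, the value $i-1$ is attained precisely at the double steps, and hence $q\in D_\sigma$ with $b_\sigma(q)=i-1$ holds exactly when $q+p_\sigma$ is the representative $4r_\mu$ of an $(i-1)$-face $\mu$ of $\sigma$, which is the third bullet. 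For the uniqueness of $b_\sigma$ inside $B_Q$: a fitting element must carry the value $i$ at the origin, which excludes the $b_{i'}$ with $i'\ne i$; and among the rotations of $b_i$ only the one adapted to $J$ fits, since for $j\notin J$ the $j$-th coordinate of $p_\sigma\pm 2e_j$ is $\equiv 0\pmod 4$ and $\frac14(p_\sigma\pm 2e_j)$ would be the barycenter of a coface rather than a face of $\sigma$, so the color there is never $i-1$. A $0$-cell carries only the trivial structuring element supported at the origin, so it needs no separate discussion.

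The single non-routine ingredient is the incidence fact above --- that an $i$-dimensional cube-cell has exactly $2i$ facets, one pair for each free direction, obtained by the $\pm\frac12 e_j$ shifts of the barycenter. Everything else is bookkeeping: the arithmetic mod~$4$ recovering $D_Q$ and the dimension, the observation that an odd coordinate forces color $-1$, closure of $Q(I)$ under faces, and comparison with the explicit sets $D_1,D_2,D_3$. I would also note in passing that the points $p_\sigma\pm 2e_j$ with $j\notin J$ never interfere with the biconditional, since they lie outside the domain $D_\sigma$ of the relevant structuring element.
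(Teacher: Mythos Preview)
Your proposal is correct and follows essentially the same route as the paper: the mod-$4$ arithmetic on $4r_\sigma$ to locate $D_Q$ and read off dimensions, together with the facet incidence $r_\mu=r_\sigma\pm\frac12 e_j$ to verify the structuring-element conditions of Definition~\ref{ECM}. The only differences are stylistic: you handle all $i\in\{1,2,3\}$ uniformly via the set $J$ of free directions and argue uniqueness of $b_\sigma$ explicitly, whereas the paper works out the case of an edge parallel to the $x$-axis in coordinates and declares the remaining cases analogous.
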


\begin{figure}[t!]
	\centering
		\includegraphics[width=10cm]{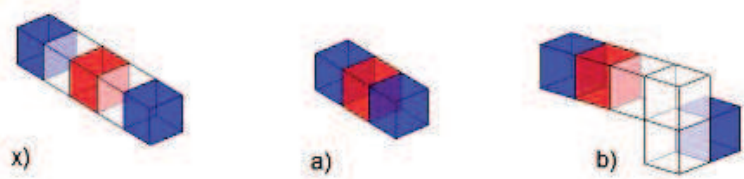}
	\caption{Structuring elements that provide boundary face relations between edges and their $0-$faces.}
	\label{rojo-azul}
\end{figure}

\begin{figure}[t!]
	\centering
		\includegraphics[width=13.5cm]{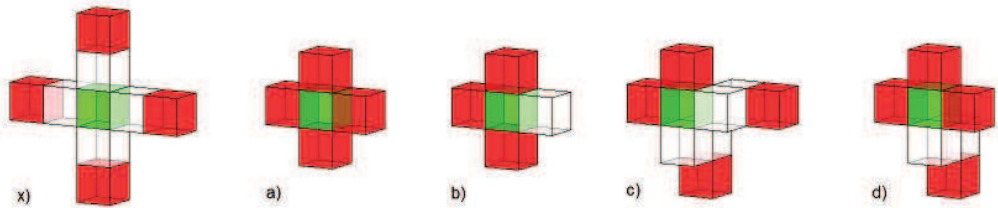}
	\caption{Structuring elements that provide boundary face relations between polygons and their $1-$faces.}
	\label{verde-rojo}
\end{figure}

\begin{figure}[t!]
	\centering
		\includegraphics[width=10cm]{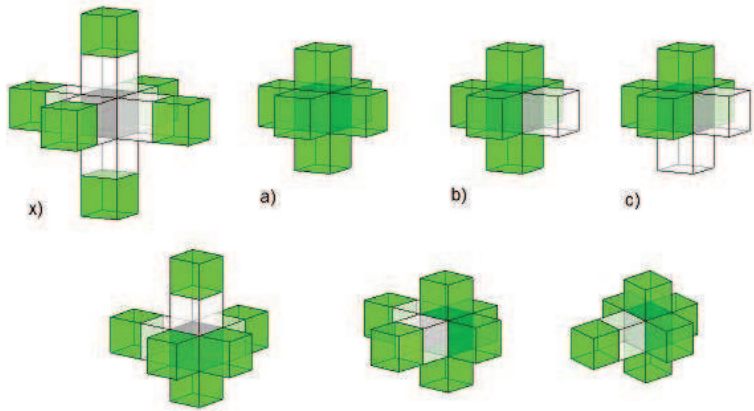}
	\caption{Structuring elements that provide boundary face relations between polyhedra and their $2-$faces.}
	\label{negro-verde}
\end{figure}

Observe that $Q(I)$ is constructed by adding
unit cubes $\sigma$ centered at points $r_{\sigma}=(i,j,k)\in \mathbb{Z}^3$. 
Therefore,  the possible coordinates of $r_{\mu}$ for the $\ell$-faces $\mu$ of $\sigma$ are: 
\begin{itemize}
\item $(i\pm\frac{1}{2},j\pm\frac{1}{2},k\pm\frac{1}{2})$ if $\ell=0$;
\item $(i,j\pm\frac{1}{2},k\pm\frac{1}{2})$, 
$(i\pm\frac{1}{2},j,k\pm\frac{1}{2})$ or 
$(i\pm\frac{1}{2},j\pm\frac{1}{2},k)$, 
 if $\ell=1$;
\item $(i\pm\frac{1}{2},j,k)$, 
$(i,j\pm\frac{1}{2},k)$ or
$(i,j,k\pm\frac{1}{2})$, 
 if $\ell=2$;
\end{itemize}

\begin{figure}[t!]
	\centering
		\includegraphics[width=13.7cm]{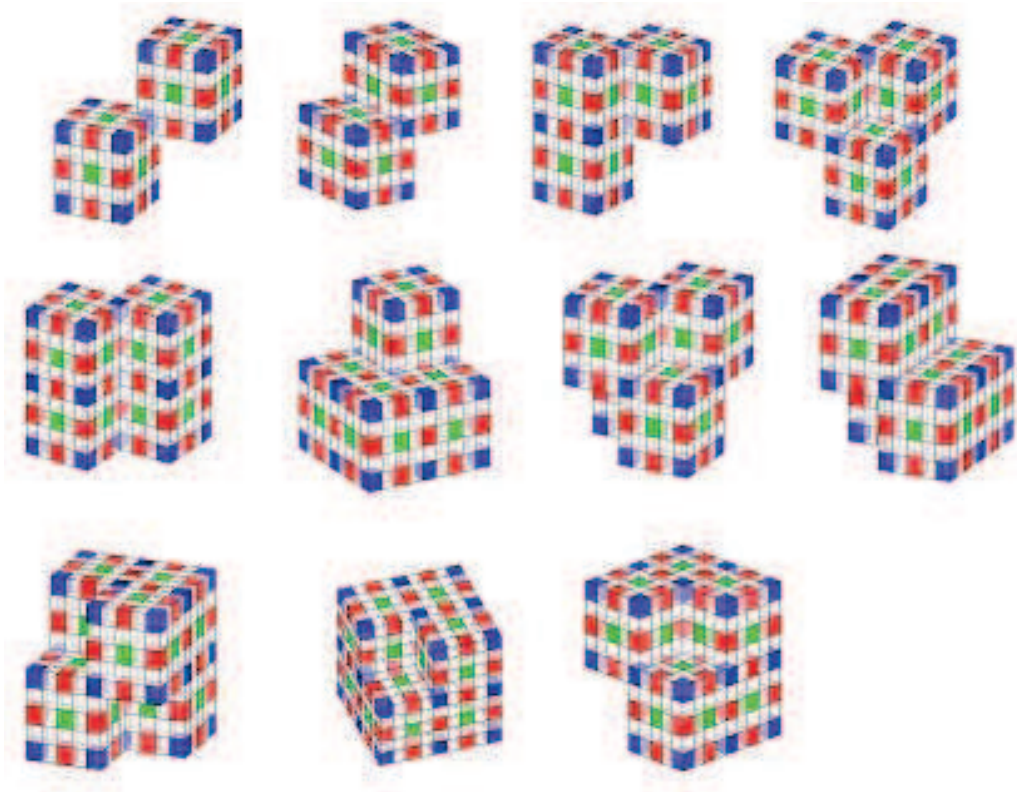}
	\caption{ 
The color values  in the ECM representations $E_Q$
 for the 3D complete cubical complexes  corresponding to the eleven $8-$cube configurations around a critical vertex showed in Fig.~\ref{critical_conf}.}
	\label{criticalpointsEQ}
\end{figure}

\begin{proof}
Since  for any $\sigma\in Q(I)$, $r_{\sigma}$ has either integer or multiple-of$-\frac{1}{2}$ coordinates, then $p=4 r_{\sigma}\in \mathbb{Z}^3$. More specifically,
\begin{itemize}
\item $p$ represents a cube in $Q(I)$, if and only if $p=(4i,4j,4k)$;
\item $p$ represents a square face in $Q(I)$, if and only if either $p=(4i+2,4j,4k)$, $p=(4i,4j+2,4k)$ or $p=(4i,4j,4k+2)$;
\item $p$ represents an edge in $Q(I)$, if and only if either $p=(4i+2,4j+2,4k)$,  $p=(4i,4j+2,4k+2)$ or $p=(4i+2,4j,4k+2)$;
\item $p$ represents a vertex in $Q(I)$, if and only if $p=(4i+2,4j+2,4k+2)$;
\end{itemize}
where $i,j,k\in \mathbb{Z}$.
Besides, $g_Q$, by definition, relates each point $p\in D_Q$ representing a cell $\sigma\in Q(I)$ with the dimension of $\sigma$.
Observe that $b_{\ell}$, $\ell=1,2$, and their $90-$degree rotations describe six structural elements. For each $\ell-$cell $\sigma\in Q(I)$, $\ell=1,2$, only one of the three $b_{\ell}$ is associated to $\sigma$, depending on which axis the
cell (edge or square face) is parallel to. 
$b_3$ is associated to the $3-$cells of $Q(I)$. 
Now, we have to prove that 
for each cell $\sigma \in Q(I)$, there exists a single structuring element $b_{\sigma}\in B_Q$  satisfying the conditions in Def.~\ref{ECM}. 
We will prove this just for an edge parallel to the $x-$axis, since the proof for the rest of the cells is analogous.
 Suppose that $\sigma=e(u,v)$ is a unit edge with endpoints $u=(i-\frac{1}{2},j+\frac{1}{2},k+\frac{1}{2})$ and $v=(i + \frac{1}{2},j+\frac{1}{2},k+\frac{1}{2})$ (where $i,j,k\in \mathbb{Z}$). Then, $p_{\sigma}=(4i,4j+2,4k+2)$,
$p_{u}=(4i-2,4j+2,4k+2)$ and $p_{v}=(4i+2,4j+2,4k+2)$. Therefore, $b_\sigma = b_1: D_1\to\mathbb{Z}$ since:
    \begin{itemize}
    \item $b_1(o)=1=g_Q(p_\sigma)$,\\
$b_1((\pm 1,0,0))=g_Q((4i\pm 1,4j+2,4k+2))=-1$\\
$b_1((\pm 2,0,0))=g_Q((4i\pm 2,4j+2,4k+2))=0$.
       \item If $p=p_\mu$ represents an $(i-1)-$face of $\sigma$, then either $\mu=u$,  and $b_{1}(p_u-p_\sigma)=b_{1}((-2,0,0))=0$; or $\mu=v$ and $b_{1}(p_v-p_\sigma)=b_{1}((2,0,0))=0$.\\
 Besides, if $p \in \mathbb{Z}^3$ such that $b_{1}(p-p_\sigma)=0$, then $p-p_\sigma =(\pm 2,0,0)$ and therefore $p=p_u$ or $p=p_v$.
    \end{itemize}
\end{proof}

See Fig.~\ref{criticalpointsEQ} as an example of the color values in the ECM representations $E_Q$ for several 3D complete cubical complexes $Q(I)$.

\begin{remark}\label{remark:coface}
The coface relations between cells can also be codified in terms of structuring elements in the ECM representation. More specifically, given a vertex $v\in Q(I)$ and its corresponding point $p_v$ in the ECM representation $E_Q$, all the points representing the cofaces of $v$ (see Fig.~\ref{grid}) can be found using a structuring element in the set $B^c_Q=\{b_{\ell}^c:D_{\ell}^c\to \mathbb{Z}\}_{\ell=1,2,3}$, 
where $D_{\ell}^c$ and $b_{\ell}^c:D_{\ell}^c\to \mathbb{Z}$ (modulo $90-$degree rotations) are given by:
\begin{itemize}
\item $D_1^c=\{o,(1,0,0),(2,0,0)\}$; \\
$b_1^c(o)=0$, $b_1^c((1,0,0))=-1$ and  $b_1^c((2,0,0))=1$;
\item $D_2^c=\{o,(1,1,0),(2,2,0)\}$;\\
 $b_2^c(o)=0$, $b_2^c((1,1,0))=-1$ and  $b_2^c((2,2,0))=2$.
\item $D_3^c=\{o,(1,1,1),(2,2,2)\}$;\\
 $b_2^c(o)=0$, $b_2^c((1,1,1))=-1$ and  $b_2^c((2,2,2))=3$.
\end{itemize}
\end{remark}

\begin{figure}[t!]
	\centering
		\includegraphics[width=13cm]{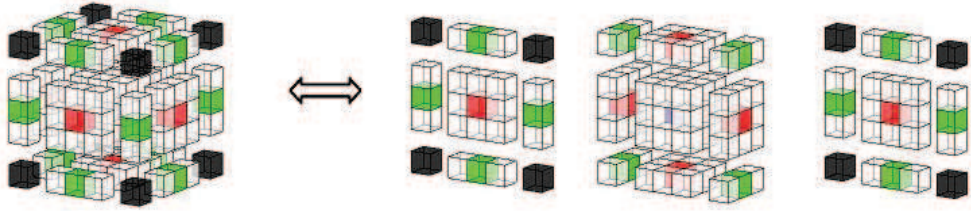}
	\caption{Color values of the voxels representing the cofaces of a vertex $v$ when $v$ is shared by $8$ cubes.}
	\label{grid}
\end{figure}

\begin{proposition}\label{prop:critical}
Given the ECM representation $E_Q$ of a cubical complex $Q(I)$ associated to a binary 3D digital image $I$, 
consider  the set of structuring elements $B^{critical}_Q=\{b^{critical}:D^{critical}\to \mathbb{Z}\}$ 
showed in Fig.~\ref{criticalpoints} 
(modulo reflections and $90-$degree rotations) and described below: 
\begin{itemize}
\item $D^{critical}=\{o,(\pm 1,\pm 1,\pm 1),(\pm 2,\pm 2,\pm 2)\}$;
\item $b^{critical}(o)=0$, $b^{critical}((\pm 1,\pm 1,\pm 1))=-1$ and  
$b^{critical}((\pm 2,\pm 2,\pm 2))$ is either $3$ or $-1$ (see Fig.~\ref{criticalpoints}). 
\end{itemize}
A point $p\in \mathbb{Z}^3$ represents a critical vertex in $Q(I)$ if and only if a structuring element $b^{critical}\in B^{critical}_Q$ fits around $p$.
\end{proposition}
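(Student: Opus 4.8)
The plan is to reduce the statement to the classification of the eleven $8$-cube configurations around a critical vertex recalled around Fig.~\ref{critical_conf}, by translating that classification into the language of $g_Q$ and structuring elements. The bridge between the two viewpoints is the observation that, in $E_Q$, the grayscale values on the $17$ points of $p+D^{critical}$ encode precisely the $8$-cube configuration around the vertex possibly represented by $p$.

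First I would record what $g_Q(p)=0$ forces. By the case analysis carried out in the proof of Prop.~\ref{EQ}, a point whose three coordinates are all congruent to $2$ modulo $4$ is the one representing a vertex of $Q(I)$, and no other point of $D_Q$ represents a vertex; hence $g_Q(p)=0$ is equivalent to $p=(4i+2,4j+2,4k+2)=p_v$ for the unique vertex $v=(i+\frac{1}{2},j+\frac{1}{2},k+\frac{1}{2})$ of $Q(I)$. Next, the $8$ unit cubes that may contain $v$ are exactly those centered at $(i+\varepsilon_1,j+\varepsilon_2,k+\varepsilon_3)$ with $\varepsilon_m\in\{0,1\}$, and a short computation shows that, when present in $Q(I)$, such a cube is represented by one of the $8$ points $p_v+(\pm2,\pm2,\pm2)$; moreover, again by the case analysis in Prop.~\ref{EQ}, a point all of whose coordinates are divisible by $4$ can only represent a $3$-cell, so $g_Q$ equals $3$ there if the corresponding cube is present and $-1$ if it is absent. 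Finally, the $8$ points $p_v+(\pm1,\pm1,\pm1)$ have all-odd coordinates, hence lie outside $D_Q$, so $g_Q$ is necessarily $-1$ on them; this is why the elements of $B^{critical}_Q$ prescribe $-1$ at those positions and why that part of the pattern imposes no genuine constraint on $Q(I)$ (it merely fixes the shape of the structuring element). Combining these three facts, the restriction of $g_Q$ to $p_v+D^{critical}$ is a faithful, symmetry-equivariant encoding of the $8$-cube configuration around $v$: the positions with value $3$ are exactly the cubes of $Q(I)$ through $v$, and those with value $-1$ are the absent ones.

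With this dictionary, the proof becomes a figure-by-figure verification. On one hand, ``$b^{critical}$ fits around $p$'' means exactly that $g_Q(p)=0$, so $p=p_v$, and that the induced configuration around $v$ is one of the eleven patterns of Fig.~\ref{criticalpoints}; comparing Fig.~\ref{criticalpoints} with Fig.~\ref{critical_conf} (black cube $\leftrightarrow$ value $3$, absent cube $\leftrightarrow$ value $-1$, both read modulo reflections and $90$-degree rotations) shows that these eleven patterns are precisely the eleven $8$-cube configurations around a critical vertex. On the other hand, if $p$ represents a critical vertex $v$, then $v$ is a $0$-cell, so $g_Q(p)=0$, and by the classification recalled before Prop.~\ref{E1E2} --- the eleven configurations of Fig.~\ref{critical_conf} being exactly the $8$-cube configurations whose central vertex fails $E1$ or $E2$, i.e.\ is critical --- the configuration around $v$ is one of those eleven; hence the corresponding element of $B^{critical}_Q$, taken with the appropriate reflection and rotation, fits around $p$. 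The two implications give the ``if and only if''.

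Since the genuine input --- that those eleven configurations are exactly the critical ones --- is already available from the discussion around Fig.~\ref{critical_conf} and from \cite{Lat97}, no hard analytic step remains; the main point needing care is the bookkeeping: the coordinate identification between the $8$ cubes through $v$ and the $8$ offsets $(\pm2,\pm2,\pm2)$, the observation that the offsets $(\pm1,\pm1,\pm1)$ are automatically $-1$ in $E_Q$, and the one-to-one matching (up to the cube symmetry group) of the eleven structuring elements of Fig.~\ref{criticalpoints} with the eleven configurations of Fig.~\ref{critical_conf}.
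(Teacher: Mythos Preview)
Your argument is correct and rests on the same ingredients as the paper's proof: the coordinate characterization of cells in $E_Q$ from Prop.~\ref{EQ}, and the classification of critical configurations from Fig.~\ref{critical_conf}. The difference is organizational. The paper works out a single representative case explicitly (the two-cube configuration in the upper-left of Fig.~\ref{critical_conf}), verifying by direct coordinate computation that the corresponding structuring element fits around $p_v$, and then declares the remaining ten cases analogous. You instead set up, once and for all, the dictionary between the $8$-cube configuration around a vertex $v$ and the values of $g_Q$ on $p_v+D^{critical}$ (cubes $\leftrightarrow$ offsets $(\pm2,\pm2,\pm2)$ with value $3$ or $-1$; offsets $(\pm1,\pm1,\pm1)$ automatically $-1$), and then invoke the known classification to obtain both implications simultaneously for all eleven patterns. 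Your route is slightly more conceptual and, in particular, makes the converse direction and the role of the ``inert'' offsets $(\pm1,\pm1,\pm1)$ explicit, whereas the paper's example-driven proof leaves these to the reader; the paper's version, on the other hand, has the virtue of showing one case in full concrete detail.
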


\begin{proof} 
Consider the $8-$cube configuration around a vertex showed in upper-left corner of Fig.~\ref{critical_conf}. 
Suppose that a vertex  
$v=(i+\frac{1}{2},j+\frac{1}{2},k+\frac{1}{2})$ is shared by exactly two unit cubes $c_1$ and $c_2$ 
such that
$r_{c_1}=(i,j,k)$ and $r_{c_2}=(i+1,j+1,k)$, where $i,j,k\in\mathbb{Z}$. Then 
$p_v=h^{-1}_Q(v)=(4i+2,4j+2,4k+2)$,
 $p_{c_1}=h^{-1}_Q(c_1)=(4i, 4j, 4k)$
and $p_{c_2}=h^{-1}_Q(c_2)=(4i+ 4, 4j+4, 4k)$.
 Now consider the structuring element $b_1^{critical}$ depicted in  Fig.~\ref{criticalpoints}.(upper-left). 
Points in $D_1^{critical}$ for which $b_1^{critical}$ takes value $3$ are $p_1=(-2,-2,-2)$ and $p_2=(2,2,-2)$. Let us check that $b_1^{critical}$ fits around $p_v$:
\begin{itemize}
\item $b_1^{critical}(o)=0=g_Q(p_v)$;
\item  $b_1^{critical}(p_1)=3=g_Q(4i,4j,4k)$;
$b_1^{critical}(p_2)=3=g_Q(4i+4,4j+4,4k)$;
\item  $b_1^{critical}(p)=-1=g_Q(p+p_v)$, 
 for  $p\in \{(\pm 1,\pm 1,\pm 1), (\pm 2,\pm 2,\pm 2)\}\setminus \{p_1,p_2\}$, since
 $\frac{1}{4}(p+p_v)$ does not correspond to the barycenter of any cell in $Q(I)$.
\end{itemize}
The rest of the configurations can be proven using similar arguments than above.
\end{proof}

\begin{figure}[t!]
	\centering
		\includegraphics[width=13.7cm]{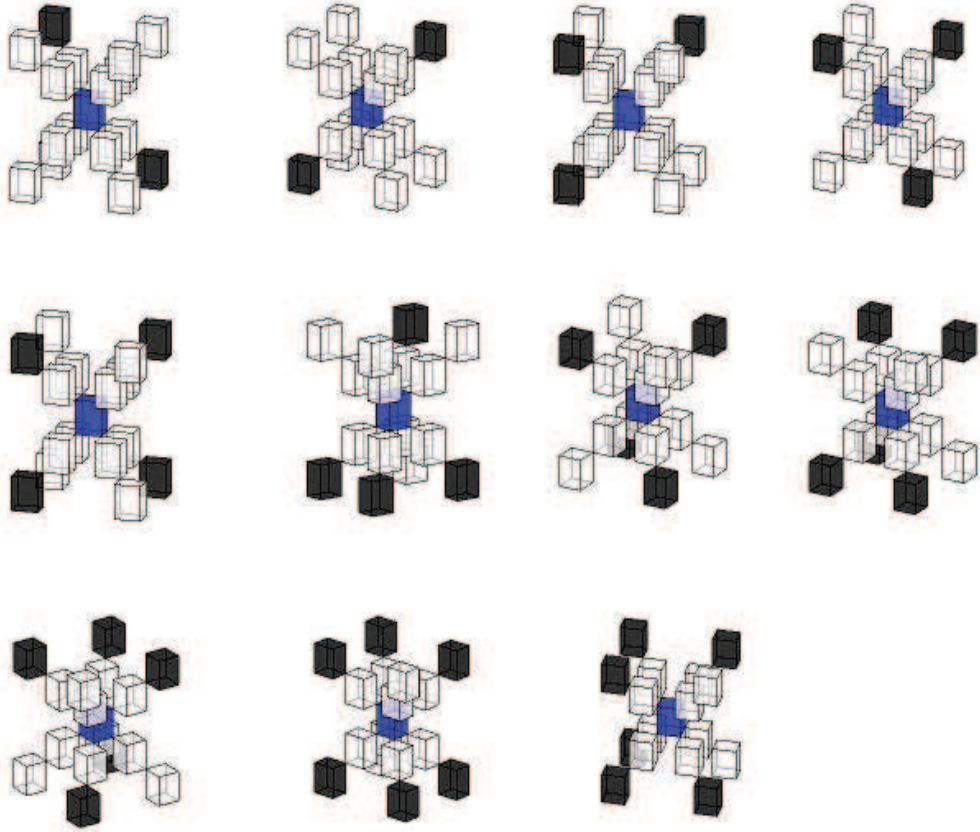}
	\caption{Structuring elements  (modulo reflections and $90-$degree rotations) 
for the eleven $8-$cube  configurations around a critical vertex in $Q(I)$
 showed in Fig.~\ref{critical_conf}.}
	\label{criticalpoints}
\end{figure}

\section{Repairing the Critical Vertices on the ECM Representation $E_Q$ of $Q(I)$
}\label{repairing}

 In this section, given the  cubical complex $Q(I)$ (associated to a 3D image $I$) 
and the ECM representation $E_Q$ of $Q(I)$ from Prop.~\ref{EQ}, 
 we show how to obtain the ECM representation $E_P=(h_P,g_P,B_P)$ of 
a well-composed polyhedral complex $P(I)$ homotopy equivalent to $Q(I)$. 
The ``repairing" process that will be explained below consists of running over all the points $p\in \mathbb{Z}^3$ that represent critical vertices $v\in Q(I)$ and accomplishing a color-changing operation by
modifying the values of $g_Q$ in a neighborhood of $p$. 
New set of structuring elements, $B_P$,  is then defined to codify the boundary relations between cells in $P(I)$.

Let $\ell=1,2$. The following sets of points in $\mathbb{Z}^3$ will be useful for different descriptions of points inside the ECM representation $E_P$:
\begin{itemize}
\item $N_6^\ell=
 \bigcup_{i=1,2,3}\{(x_1,x_2,x_3)\in \mathbb{Z}^3$ s.t. $|x_i|= \ell$ and $x_j=0$ for $j\neq i\}$.
\item $N_{12}^\ell= 
\bigcup_{i=1,2,3}\{(x_1,x_2,x_3)\in \mathbb{Z}^3$ s.t.  $x_i=0$ and $|x_j|= \ell$ for $j\neq i\}$.
\item $N_{8}^{\ell}=
\{(x_1,x_2,x_3)\in \mathbb{Z}^3$ s.t. $|x_i|= \ell$ for $i=1,2,3\}$.
\item $N^{\ell}=
\bigcup_{i=1,2,3}\{(x_1,x_2,x_3)\in \mathbb{Z}^3$ s.t. $|x_i|= \ell$ and $|x_j|\leq \ell$ for $j\neq i\}.$
\item $N^{\leq \ell}= 
 \{(x_1,x_2,x_3)\in \mathbb{Z}^3$ s.t. $|x_i|\leq \ell, i=1,2,3\}$.
\end{itemize}
Finally, for a set $N\subset \mathbb{Z}^3$ and a point $p\in \mathbb{Z}^3$, let us denote by  $N(p)$ the set $\{q+p$ such that $q\in N\}$.

Observe that $N^{\leq 1}=\{o\}\cup N^{1}$ and $N^{\leq 2}=\{o\}\cup N^{1}\cup N^2$.
In fact, given a point $p\in\mathbb{Z}^3$, $N^{\leq \ell}(p)$ is the $(2\ell+1)\times (2\ell+1)\times (2\ell+1)$ block of voxels in $\mathbb{Z}^3$ centered at point $p$. If we consider this block as a 
big cube $C_p$ composed by $(2\ell+1)^3$ unit cubes, then:
\begin{itemize}
\item $N^{\ell}(p)$ are the faces of such a cube $C_p$.
\item $N^{\ell}_6(p)$ are the $6$ endpoints of the $3$ segments with mid-point in $p$, length equal to $2\ell+1$, and parallel to the coordinate axes.
\item $N^{\ell}_{12}(p)$ are the (totally $12$) vertices of the $3$ squares centered at $p$, edge-length equal to $2\ell+1$, parallel to the coordinate planes, and whose edges are parallel to coordinate axes.
\item $N^{\ell}_8(p)$ are the vertices of the cube $C_p$.
\end{itemize}

The following remark ensures that given a 
point $p\in\mathbb{Z}^3$ such that $g_Q(p)=0$ (which corresponds to a vertex $v=h_Q(p)\in Q(I)$),
the modification of the map $g_Q$ on the points in $N^{\leq 2}(p)$
 only affects to points that either represent    cofaces of $v$ or do not represent any cell in $Q(I)$.

\begin{remark}\label{remark:vertice}
Let $p\in\mathbb{Z}^3$ such that $g_Q(p)=0$. Let $v=h_Q(p)$ be the corresponding vertex in $Q(I)$.
\begin{itemize}
\item If  $q\in N^{\leq 2}(p)\setminus (N^2_6(p)\cup N^2_{12}(p)\cup N^2_{8}(p) )$ then $g_Q(q)=-1$ 
\item  If $q\in N_6^2(p)$, then  either $g_Q(q)=1$  ($q$ represents a $1-$coface of $v$)  or $g_Q(q)=-1$.\\ Analogously, 
if $q\in N_{12}^2(p)$, then  either $g_Q(q)=2$  
 ($q$ represents a $2-$coface of $v$)
or $g_Q(q)=-1$.\\
Finally, if
$q\in N_8^2(p)$, then  either $g_Q(q)=3$ 
 ($q$ represents a $3-$coface of $v$)
or $g_Q(q)=-1$.
\end{itemize}
\end{remark}
 See, for example,  Fig.~\ref{grid}, which is an example of the values of $g_Q$ on the set $N^{\leq 2}(p)$ where $p$ represents a vertex shared by eight cubes in $Q(I)$.

Now, the following  method is performed on the ECM representation $E_Q$ 
to construct a new map $g_P:\mathbb{Z}^3\to \mathbb{Z}$.

\begin{proc}\label{methodExtended}
Start with the ECM representation  $E_Q=(h_Q,g_Q,B_Q)$  of a given cubical complex $Q(I)$.
\begin{enumerate}
\item[0.]
Initially,
$g_P(p)=g_Q(p)$ for any $p\in \mathbb{Z}^3$.

\item[1.] 
Compute the set $R\subseteq \mathbb{Z}^3$ where $p\in R$ if $g_Q(p)=0$ 
and one of the structuring elements showed in Fig.~\ref{criticalpoints} fits around $p$ (that is,  
 $h_Q(p)\in Q(I)$ is a critical vertex).
\item[2.] 
For each point  $p\in R$:
\begin{itemize}
\item 
Compute the set $ST_p\subseteq \mathbb{Z}^3$ such that 
$q\in ST_p$ if one of the structuring elements
given in Remark~\ref{remark:coface} fits around $q$ (that is, 
$h_Q(q)$ is a cell in $St\{h_Q(p)\}$).
\item 
$g_P(p)=3$;\\
$g_P(q)=2$ if $q\in N^1_6(p)$;\\
$g_P(q)=1$ if $q\in N^1_{12}(p)$;\\
and $g_P(q)=0$ if $q\in N^1_8(p)$.\\
See Fig.~\ref{Op1}.a.
\item For each point $q\in ST_p$ such that $g_Q(q)=1$, 
let $q_1\in D_Q$ represent the  other endpoint of $q$ different to $p$. Then:\\
$g_P(q)=3$;\\
$g_P(s)=2$ if $s\in N^1_6(q)\setminus (N^1(p)\cup N^1(q_1))$;\\
and $g_P(s)=1$ if $s\in N^1_{12}(q)\setminus (N^1(p)\cup N^1(q_1))$;\\
See Fig.~\ref{Op1}.b.
\item For each point $q\in ST_p$ such that $g_Q(q)=2$, let 
 $q_1,q_2,q_3,q_4\in D_Q$ represent the $1$-faces of the cell represented by $q$.
Then:\\
  $g_P(q)=3$;\\ 
and
$g_P(s)=2$ if $s\in N^1_6(q)\setminus (N^1(q_1)\cup N^1(q_2)\cup N^1(q_3)\cup N^1(q_4))$;\\
See Fig.~\ref{Op1}.c.
\end{itemize}
\end{enumerate}
\end{proc}

Notice that the points $q\in ST_p$ representing cubes in $Q(I)$ remain with the same color value, that is, $g_P(q)=3=g_Q(q)$.
The above procedure could be improved by computing the sets $R$ and $\{ST_p: p\in R\}$ at once, since the coordinates of the different cofaces of a point $p$ representing a critical vertex in $Q(I)$ are known. For the sake of clarity we have separated the seeking of critical vertices (the set $R$) and their cofaces (the set $ST_p$ for each $p$).

\begin{proposition} The map $g_P:\mathbb{Z}^3\to \mathbb{Z}$ is well-defined.
\end{proposition}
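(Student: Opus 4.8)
The plan is to show that Procedure~\ref{methodExtended} never tries to assign two different colors to the same point $p\in\mathbb{Z}^3$, so that the piecewise definition of $g_P$ is consistent. The natural strategy is to partition the instructions into (a) the initialization $g_P=g_Q$, (b) the recoloring done \emph{per critical vertex} $p\in R$ (the assignments on $N^{\leq 1}(p)$ together with the assignments on the neighborhoods of the cofaces $q\in ST_p$), and (c) to check consistency \emph{within} one critical vertex's block of assignments and then \emph{across} different critical vertices. For step (b)-internal consistency, I would observe that for a fixed $p$ the only point assigned the value $3$ by the vertex itself is $p$, that $N^1_6(p)$, $N^1_{12}(p)$, $N^1_8(p)$ are pairwise disjoint (they are distinguished by how many coordinates equal $\pm 1$), and that the set-difference clauses ``$\setminus(N^1(p)\cup\cdots)$'' in the edge and face cases are written precisely so that the voxels recolored around an edge $q$ or face $q$ do not overlap the voxels already recolored around $p$ or around the shared endpoints/faces $q_i$; since $N^{\leq 1}(q)=\{q\}\cup N^1(q)$ for the mid-point-type neighbours that are being written, and $q$ itself gets the unambiguous value $3$, the assignments inside the star of a single critical vertex are mutually consistent.

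For the cross-vertex part, the key geometric fact is Remark~\ref{remark:vertice}: modifying $g_Q$ on $N^{\leq 2}(p)$ only touches points representing cofaces of $v=h_Q(p)$ or points that represent no cell of $Q(I)$. One then needs that the recoloring triggered by two distinct critical vertices $p,p'\in R$ is compatible on their overlap. Here I would argue that every point $p$ ever recolored (for any critical vertex) lies in $N^{\leq 2}$ of that vertex, hence represents a cell in the closure of the star of that vertex, and that the \emph{new} color assigned to such a point is determined purely by which type of geometric position it occupies relative to the underlying cubical grid — i.e.\ $g_P(p)=3$ exactly when $p=4r_\sigma$ with $\sigma$ a cube or when $p$ is the barycenter-image of a cube/square/edge in the star of \emph{some} critical vertex, $g_P(p)=2$ when $p$ sits on a $N^1_6$-type position of a recolored cell, and so on. Because two critical vertices that share a coface $\sigma$ both see $\sigma$ through the same local grid geometry, they assign the same value; and because the assignment rule for a point is a function only of its coordinates modulo the $4\mathbb{Z}^3$ grid (and of whether its $\tfrac14$-rescaling is a cell of $Q(I)$), collisions resolve to equal values. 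Equivalently, one can exhibit a single global rule: $g_P(p)=g_Q(p)$ unless $p\in N^{\leq 1}(q)$ for some cell $q$ in the star of a critical vertex, in which case $g_P(p)$ is $3,2,1,0$ according to the dimension-type of the $N^1$-stratum containing $p-4r_q$; this rule is manifestly single-valued.

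Concretely, the proof would proceed as: first fix one critical vertex $p\in R$ and enumerate the disjoint pieces $\{p\}$, $N^1_6(p)$, $N^1_{12}(p)$, $N^1_8(p)$, $ST_p\cap g_Q^{-1}(1)$, $ST_p\cap g_Q^{-1}(2)$, and check pairwise disjointness of the voxel sets that actually get written, using the explicit set-differences in the procedure; second, note the cube-type points of $ST_p$ keep value $3$, consistent with $g_Q$; third, pass to two critical vertices $p\neq p'$, use Remark~\ref{remark:vertice} to see their recolored regions can only overlap in cells of $\mathrm{Cl}(St\{h_Q(p)\})\cap \mathrm{Cl}(St\{h_Q(p')\})$, and verify that on such a shared cell both vertices invoke the same clause of the procedure (because the relative position of a shared coface to each critical vertex is the same grid configuration) and therefore assign the same value. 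The main obstacle I expect is the cross-vertex bookkeeping: two nearby critical vertices can have overlapping stars (for instance, two critical vertices sharing an edge or a square), and one must make sure that, say, an edge $q$ that is a common $1$-coface of $p$ and $p'$ is recolored to $3$ by both, that a voxel $s\in N^1_6(q)$ is either excluded for both (because it lies in $N^1(p)=N^1(p')$-type excluded sets) or recolored to $2$ by both; handling all the ways two of the eleven critical configurations can sit next to each other is the delicate part, and the cleanest way to dispatch it is to replace the procedural description by the single global coordinate-indexed rule above and observe it is well-defined by construction.
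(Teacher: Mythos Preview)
Your proposal is correct and ultimately converges on the same idea as the paper, but the paper reaches the conclusion far more directly. Instead of separating the analysis into ``within one star'' versus ``across stars'' and then arguing that overlapping assignments agree, the paper writes down the global coordinate rule you only allude to at the end: it partitions $\mathbb{Z}^3$ into four disjoint pieces $S_0,S_1,S_2,S_3$ according to how many coordinates are $\equiv 0\pmod 4$ (equivalently, according to the residue pattern of $p$ in the $4\mathbb{Z}^3$ grid), and observes that the recoloring triggered by a cell $q$ with $g_Q(q)=i$ writes exactly to $S_i\cap N^{\leq 1}(q)$. Disjointness of the written regions then drops out in two lines: if the two cells have different dimensions the regions lie in different $S_i$'s; if they have the same dimension their $N^{\leq 1}$-neighborhoods (restricted to that $S_i$) do not meet.

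This immediately eliminates all of the cross-vertex bookkeeping you flag as ``the delicate part'': one never has to compare how two adjacent critical configurations interact, because the written regions are disjoint regardless of which critical vertex triggered the write. Your case-by-case route (check the set-differences inside one star, then argue agreement on shared cofaces) works, but it is exactly the computation that the $S_i$ partition packages uniformly. Since you already recognized that ``the assignment rule for a point is a function only of its coordinates modulo the $4\mathbb{Z}^3$ grid,'' the cleanest fix is simply to make that rule explicit as the paper does.
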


\begin{proof}
Observe that the 3D space $\mathbb{Z}^3$ can be decomposed into the following non-overlapping subspaces (see Fig.~\ref{grid}):
$$\begin{array}{rl}
S_0=\{&(4i+2\pm \ell_1,4j+2\pm \ell_2,4k+2\pm \ell_3)\; \}_{i,j,k\in\mathbb{Z},\,\ell_1,\ell_2,\ell_3\in\{0,1\}}\\
S_1=\{&(4i+2\pm \ell_1,4j+2\pm \ell_2,4k),(4i+2\pm \ell_1,4j,4k+2 \pm \ell_2),\\
&(4i,4j+2\pm \ell_1,4k+2\pm \ell_2)\; \}_{i,j,k\in\mathbb{Z},\,\ell_1,\ell_2\in\{0,1\}}\\
S_2=\{&(4i+2\pm \ell,4j,4k),(4i,4j+2\pm \ell,4k ),\\
&(4i,4j,4k+2 \pm \ell)\;\}_{i,j,k\in\mathbb{Z},\,\ell\in\{0,1\}}\\
S_3=\{&(4_i,4_j,4_k)\;\}_{i,j,k\in\mathbb{Z}}\end{array}$$
Then, color-changing process corresponding to 
a point $q\in\mathbb{Z}^3$ with $g_Q(q)=i$, for $i=0,1,2,3$, is performed on $S_i\cap N^{\leq 1}(q)$. In fact, the colors of all the points in $S_i\cap N^{\leq 1}(q)$ are modified. \\
Let $q_1,q_2\in \mathbb{Z}^3$ be points with $i_1=g_Q(q_1)\neq -1$ and $i_2=g_Q(q_2)\neq -1$. The subsets $S_{i_1}\cap N^{\leq 1}(q_1)$ and  $S_{i_2}\cap N^{\leq 1}(q_2)$ never intersect
 since:
\begin{itemize}
\item $N^{\leq 1}(q_1)\cap N^{\leq 1}(q_2)=\emptyset$ if $i_1=i_2$,
\item $S_{i_1}\cap S_{i_2}=\emptyset$ if $i_1\neq i_2$.
\end{itemize}
\end{proof}

\begin{figure}[t!]
	\centering
		\includegraphics[width=12cm]{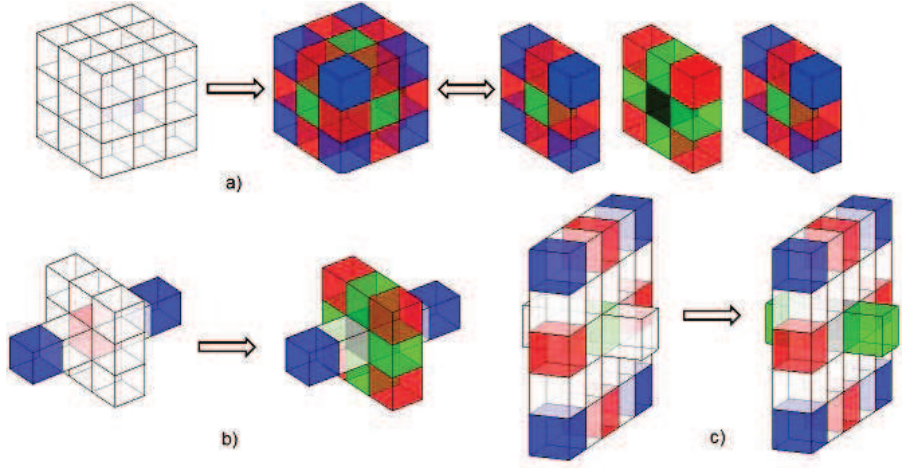}
	\caption{a), b) and c) Illustration of the three color-changing operations on $g_Q$ to obtain $g_P$.}
	\label{Op1}
\end{figure}

Fig.~\ref{Op1} illustrates the different color-changing operations described in 
Proc.~\ref{methodExtended}. In Fig.~\ref{caso2arreglado}.a, the 
color values of the ECM representation $E_{Q_1}$ of the 
cubical complex $Q_1$ represented on the top-left of Fig.~\ref{critical_conf}, are shown.
Fig.~\ref{caso2arreglado}.b shows the color values obtained after performing 
Proc.~\ref{methodExtended} on  $E_{Q_1}$.

\begin{figure}[t!]
	\centering
		\includegraphics[width=13cm]{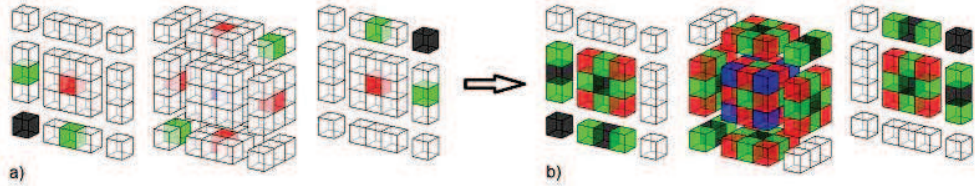}
	\caption{Color changing operation around a point representing the critical vertex in the top-left critical configuration of Fig.~\ref{critical_conf}.}
	\label{caso2arreglado}
\end{figure}

Now, we construct a new polyhedral complex $P(I)$ and a correspondence $h_P$ between points with color values $g_P(p)\neq -1$ and cells of  $P(I)$, in a  way that $g_P(p)=dim(h_P(p))$. Any point $p\in \mathbb{Z}^3$ with $g_P(p)=3$ will represent a polyhedron such that its faces will be represented by points in its neighborhood $N^{\leq 2}(p)$. Later, we will define a set of structuring elements $B_P$ for which $(g_P, h_P, B_P)$ is an ECM representation of  $P(I)$.

\begin{figure}[t!]
	\centering
		\includegraphics[width=13.5cm]{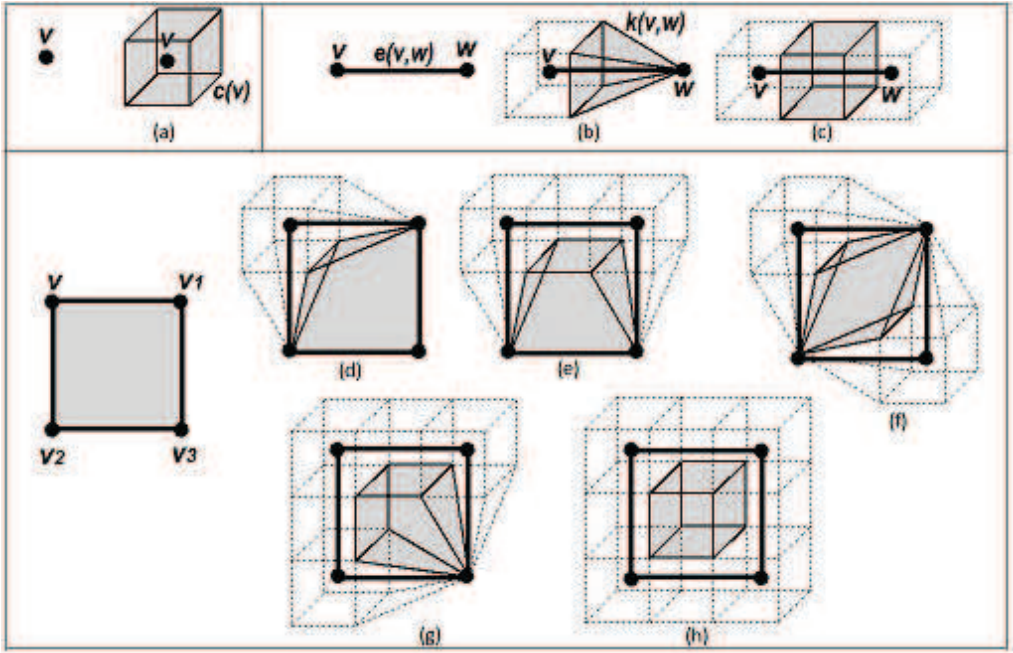}
	\caption{New polyhedra created in $P(I)$ associated with a vertex $v\in Q(I)$, a unit edge $e\in Q(I)$ with endpoints $v$ and $w$ and a unit square $s\in Q(I)$ with $0-$faces $v$, $v_1$, $v_2$ and $v_3$. }
	\label{poligonos5}
\end{figure}

The following polyhedra will be used in the construction of the polyhedral complex $P(I)$, that will be homotopy equivalent to $Q(I)$. 
\begin{itemize}
\item[(a)] The size$-\frac{1}{2}$ cube $c(v)$ centered at point $v\in \mathbb{R}^3$, with faces parallel to the coordinate planes.
See Fig.~\ref{poligonos5}.a.
\item[(b)] The pyramid $k(v,w)$  with apex a point $w$ and base the square-face of $c(v)$  whose barycenter lies on the edge $e(v,w)$ with endpoints $v$ and $w$.
 See Fig.~\ref{poligonos5}.b.
\item[(c)] The polyhedra $\{p_i(v, v_1,v_2,v_3)\}_{i=1,2,3,4}$ (where $v, v_1,v_2,v_3$ are four distinct points in $\mathbb{R}^3$ forming a unit square
$s$) given as follows:
\begin{itemize}
\item $p_1(v, v_1,v_2,v_3)$  is 
determined by the  triangles $t(v,v_1)$ and $t(v,v_2)$, and  the edges $e(v_1,v_3)$ and $e(v_2,v_3)$ 
where $t(v,v_i)$, $i=1,2$, is the triangle face of $k(v,v_i)$ whose barycenter lies on the square $s$. 
See Fig.~\ref{poligonos5}.d.
\item  $p_2(v, v_1,v_2,v_3)$
 is determined by the triangles $t(v,v_2)$ and $t(v_1,v_3)$, the
square $s(v,v_1)$  and  the edge $e(v_2,v_3)$,
where $s(v,v_1)$ is the square face of $c(\frac{v+v_1}{2})$
whose barycenter lies on the square $s$. 
See Fig.~\ref{poligonos5}.e.
\item  $p_3(v, v_1,v_2,v_3)$
 is determined by the four triangles $t(v,v_1)$,
 $t(v,v_2)$,
 $t(v_3,v_1)$ and
 $t(v_3,v_2)$. See Fig.~\ref{poligonos5}.f.
\item  $p_4(v, v_1,v_2,v_3)$
is determined  by the squares
 $s(v,v_1)$ and
 $s(v,v_2)$
and the triangles $t(v_1,v_3)$ and
 $t(v_2,v_3)$. See Fig.~\ref{poligonos5}.g.
\end{itemize}
\item The $22$ hexahedra $\{h_i(v_1,v_2,\ldots v_8)\}_{i=1,\ldots ,22}$ (where $v_1,v_2 \ldots , v_8\in \mathbb{R}^3 $ form a unit cube $c$ centered at a point $v\in \mathbb{Z}^3$, 
with faces parallel to the coordinate planes) showed in Fig.~\ref{hexahedron}. In this picture, each bolded point  represents the vertex of $c(v_i)$, for some $i=1,\dots,8$,
which lies inside the cube $c$. For example, the hexahedron on the top-left is $c$ and the one on the bottom-right is the size$-\frac{1}{2}$ cube $c(v)$. 
  \end{itemize}

\begin{figure}[t!]
	\centering
		\includegraphics[width=13cm]{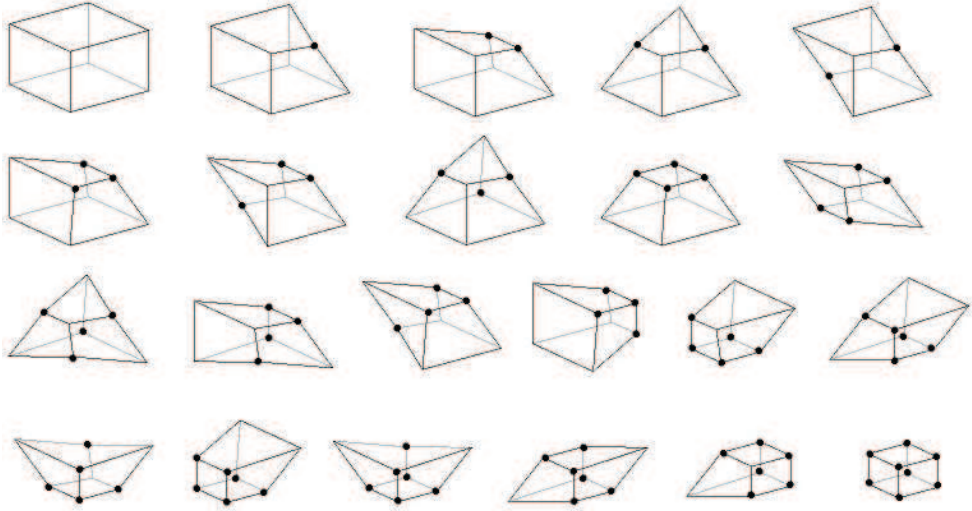}
	\caption{List of the  hexahedra that can appear in $P(I)$. Each 
  point in bold  represents the vertex of $c(v_i)$, for some $i=1,\dots,8$,
which lies inside the cube $c(v)$.}
	\label{hexahedron}
\end{figure}

The sets of data $\{E_{\alpha}=(h_{\alpha},g_{\alpha},B_{\alpha})\}_{\alpha=A,B,D,E,F,G}$ described below are ECM representations of the above polyhedra, whenever the coordinates of the vertices of the polyhedra are  multiple of $\frac{1}{2}$.
\begin{itemize}
\item[(a)] $D_A=N^{\leq 1}(p)$ for $p=4v$.

$g_A(p)=3$; 
  $g_A(q)=2$ if $q\in N^1_{6}(p)$;
 $g_A(q)=1$ if  $q\in N^1_{12}(p)$;  $g_A(q)=0$ if  $q\in N^1_{8}(p)$ 
and $g_A(q)=-1$ if $q\in\mathbb{Z}^3\setminus D_A$.
 See Fig.~\ref{nuevospolyhedroscubitos}.a. 

$h_A(p)=c(v)$;
and $h_A(q)=\sigma$ if $\sigma$ is a face of $c(v)$ and $q=4r_{\sigma}$.

Structuring elements 
 (modulo  $90-$degree rotations) 
 in $B_A$ are showed in Fig.~\ref{negro-verde}.a, 
Fig.~\ref{verde-rojo}.a     and    Fig.~\ref{rojo-azul}.a.  
\item[(b)] 
 $D_B= \{q\}\cup N^{\leq 1}(r)\setminus N^{1}(q)$ where  $q=4w$ and $r\in \mathbb{Z}^3$ is the closest point to $4r_{k(v,w)}$.

$g_B(r)=3$; $g_B(q)=0$;
$g_B(s)=2$ if $s\in N^1_{6}(r)\cap D_B$;
 $g_B(s)=1$ if  $s\in N^1_{12}(r)\cap D_B$;
    $g_B(s)=0$ if  $s\in N^1_{8}(r)\cap D_B$; 
 and $g_B(s)=-1$ if $s\in \mathbb{Z}^3\setminus D_B$.
See Fig.~\ref{nuevospolyhedroscubitos}.b.

$h_B(r)=k(v,w)$; $h_B(q)=w$;
and $h_B(s)=\sigma$ if  $\sigma$ is an $i$-face of $k(v,w)$, and $s\in D_B$ is the closest point to $4r_{\sigma}$ such that $g_B(s)=i$.

Structuring elements 
(modulo  $90-$degree  rotations) 
in $B_B$ are showed in 
Fig.~\ref{negro-verde}.b, 
Fig.~\ref{verde-rojo}.b     and    Fig.~\ref{rojo-azul}.(a,b). 
\item[(c)] The values of $g_{\alpha}$ for the rest of the ECM representation $E_{\alpha}$, $\alpha=D,E,F,G$, are showed, respectively, in  
 Fig.~\ref{nuevospolyhedroscubitos}.(d, e, f, g).

The values of $h_{\alpha}$ are defined in an analogous way as above.

 Structuring elements 
(modulo  $90-$degree  rotations) 
in $B_{\alpha}$, $\alpha=D,$ $E,F,G$,  are showed in Fig.~\ref{negro-verde}.c, 
Fig.~\ref{verde-rojo}.(c,d)     and    Fig.~\ref{rojo-azul}.(a,b).  
\end{itemize}

\begin{figure}[t!]
	\centering
		\includegraphics[width=12cm]{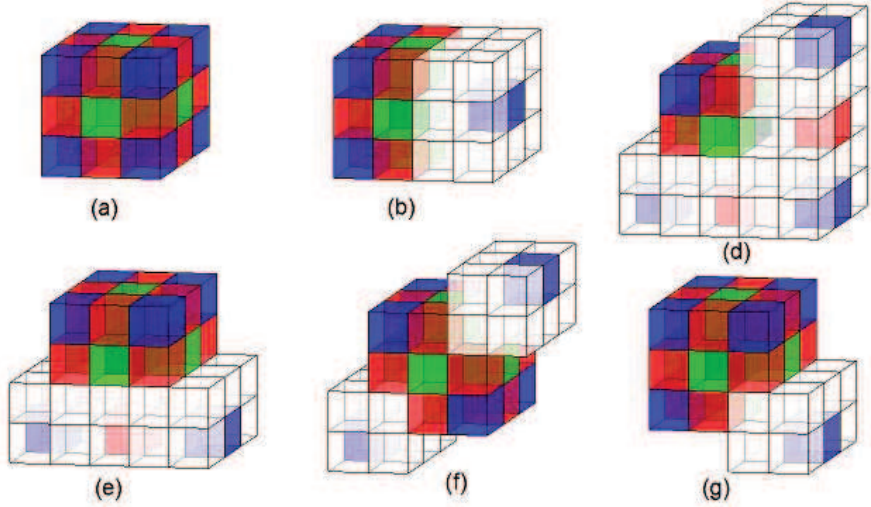}
	\caption{The color values in the ECM representations $E_{\alpha}$, $\alpha=A,B,D,E,F,G$, of the polyhedra (and all their faces) described in Fig.~\ref{poligonos5}.}  
	\label{nuevospolyhedroscubitos}
\end{figure}

The following procedure constructs the polyhedral complex $P(I)$ and the map  
 $h_{P} :D_P\to P(I)$.

\begin{proc}\label{method}
Let $E_Q$ be the ECM representation of $Q(I)$ from Prop.~\ref{EQ}. Let $g_P:\mathbb{Z}^3\to \mathbb{Z}$ the output of Proc.~\ref{methodExtended}.
Let $D_P=\{p\in \mathbb{Z}^3:g_P(p)\neq -1\}$. Let $R$ 
be the set of points in $\mathbb{Z}^3$ representing the critical vertices in $Q(I)$. Let 
 $ST_p$, $p\in R$, be the set of points in $\mathbb{Z}^3$
 representing the cells in $St\{h_Q(p)\}$ (these sets were computed  
 in  Proc.~\ref{methodExtended}).

\noindent Initially, $P(I)=Q(I)$, and $h_P(p)=h_Q(p)$ for any $p\in\mathbb{Z}^3$.

\noindent  For each $p\in R$,
\begin{itemize}
\item Replace $v=h_Q(p)\in P(I)$ by the cube $c(v)$ together with all its faces.
See Fig.~\ref{poligonos5}.a.
Define $h_{P}|_{D_A}=h_A$ where $D_A=N^{\leq 1}(p)$ and $h_A$ is described above.
\item For each $r\in ST_p$ such that $g_Q(r)=1$, let $v,w\in Q(I)$ represent the endpoints of $e(v,w)=h_Q(r)$. 
\begin{itemize}
\item If $w$  is non-critical:
\begin{itemize}
\item Replace $e(v,w)\in P(I)$ by  the pyramid $k(v,w)$ together with its 
triangular faces. See Fig.~\ref{poligonos5}.b;
\item Define $h_{P}|_{D_B}=
h_B$.
\end{itemize}
\item If $w$  is critical:
\begin{itemize}
\item Replace $e(v,w)\in P(I)$ by
  the  cube  $c(\frac{v+w}{2})$, together with its faces 
parallel to $e(v,w)$. See Fig.~\ref{poligonos5}.c.
\item Define $h_{P}|_{D_A}=h_A$ where $D_A=N^{\leq 1}(r)$.
\end{itemize}
\end{itemize}
\item For each $r\in ST_p$ such that $g_Q(r)=2$, let $v,v_1,v_2,v_3\in Q(I)$ represent the $0-$faces of $\sigma=h_Q(r)\in Q(I)$ (see Fig.~\ref{poligonos5}.left). Let $p_i=h^{-1}_Q(v_i)$ for $i=,2,3$.
\begin{itemize}
\item If $v_1,v_2,v_3$ are non-critical, 
\begin{itemize}
\item Replace $\sigma\in P(I)$ by
  the  polyhedron $p_1(v,v_1,v_2,v_3)$
together with its quadrangular faces. See Fig.~\ref{poligonos5}.d.
\item Define $h_{P}|_{D_D}=
h_D$.
\end{itemize}
\item If $v_1$ is critical but $v_2$ and $v_3$ are not, 
\begin{itemize}
\item Replace $\sigma\in P(I)$ by
  the    polyhedron $p_2(v,v_1,v_2,v_3)$
together with its quadrangular faces sharing the edge $e(v_2,v_3)$.
 See Fig.~\ref{poligonos5}.e.
\item Define $h_{P}|_{D_E}=
h_E$.
\end{itemize}
 An analogous case is when $v_2$ is critical but $v_1$ and $v_3$ are not.
\item If $v_3$ is critical but $v_1$ and $v_2$ are not, 
\begin{itemize}
\item Replace $\sigma\in P(I)$ by
  the    polyhedron $p_3(v,v_1,v_2,v_3)$
together with its quadrangular faces.
 See Fig.~\ref{poligonos5}.f.
\item  Define $h_{P}|_{D_F}=
h_F$.
\end{itemize}
\item If $v_1$ and $v_2$ are critical but $v_3$ is not, 
\begin{itemize}
\item Replace $\sigma\in P(I)$ by
  the polyhedron $p_4(v,v_1,v_2,v_3)$
together with its quadrangular faces sharing  $v_3$. See Fig.~\ref{poligonos5}.g.
\item  Define $h_{P}|_{D_G}=
h_G$.
\end{itemize}
\item If $v_1, v_2$ and  $v_3$ are critical, 
\begin{itemize}
\item Replace $\sigma\in P(I)$ by
  the  cube $c(\frac{v+v_1+v_2+v_3}{4})$
together with its square faces parallel to $\sigma$
 (see Fig.~\ref{poligonos5}.h.).
\item  Define $h_{P}|_{D_A}=
h_A$ where $D_A=N^{\leq 1}(r)$.
\end{itemize}
\end{itemize}
\item For each $r\in ST_p$ such that $g_Q(r)=3$, 
the $3$-cell $\sigma=h_Q(r)$ is replaced by a $3$-cell $\mu$ which is one of the hexahedra showed in Fig.~\ref{hexahedron} depending on the number and positions of the critical vertices that are faces of $\sigma$.  Define $h_P(r)=\mu$.
\end{itemize}
\end{proc}

\begin{theorem}\label{well-composedP}
$P(I)$ is a well-composed polyhedral complex homotopy equivalent to $Q(I)$.
\end{theorem}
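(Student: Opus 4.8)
The plan is to split the statement into its two independent assertions—homotopy equivalence $P(I)\simeq Q(I)$, and well-composedness of $P(I)$—and to prove each locally, cell by cell, exploiting the fact that Procedure~\ref{method} only modifies $Q(I)$ in the (disjoint, by the well-definedness proposition for $g_P$) stars of the critical vertices.

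\textbf{Homotopy equivalence.} First I would observe that the replacement rules in Procedure~\ref{method} are all of a single type: a cell $\sigma$ of $Q(I)$ (a vertex $v$, an edge $e(v,w)$, a square $s$, or a cube) lying in the star of some critical vertex is replaced by a subdivided, ``shrunk'' version of itself—$c(v)$ for a vertex, the pyramid $k(v,w)$ or the small cube $c(\tfrac{v+w}{2})$ for an edge, one of the polyhedra $p_i$ or a small cube for a square, one of the $22$ hexahedra for a cube—together with its new faces. The key geometric point is that each such replacement is a \emph{subdivision combined with a deformation retraction}: the new collection of cells has underlying space a closed subset of the old cell $|\sigma|$ onto which $|\sigma|$ deformation retracts, and these retractions are compatible on overlaps (because shared faces are subdivided consistently, which is exactly what the disjointness of the modified stars and the bookkeeping in Procedure~\ref{methodExtended} guarantee). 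Concretely I would argue: (i) $|P(I)|$ is a subspace of $|Q(I)|$; (ii) for each critical vertex $p\in R$, the piece of $|Q(I)|$ sitting over $N^{\leq 2}(p)$ deformation retracts onto the corresponding piece of $|P(I)|$—this can be checked from the finite list of local configurations in Fig.~\ref{poligonos5} and Fig.~\ref{hexahedron}, since each $c(v)$, $k(v,w)$, $p_i$, hexahedron is a (PL) ball and the retraction is the obvious radial/collapsing map; (iii) these local retractions agree with the identity outside the modified region and are mutually compatible, so they glue to a deformation retraction $|Q(I)|\to|P(I)|$. Hence $P(I)\simeq Q(I)$.

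\textbf{Well-composedness.} For this I would invoke Proposition~\ref{E1E2}: it suffices to verify conditions (E1) and (E2) for $\partial P(I)$. Since $P(I)$ agrees with $Q(I)$ away from the modified stars, and $Q(I)$ already satisfies (E1)--(E2) at every non-critical vertex and every edge not incident to a critical vertex, the verification reduces to a \emph{finite, local} check at the cells that were created or altered. For (E1): one checks that every new edge—edges of the small cubes $c(v)$, edges of pyramids and of the polyhedra $p_i$ and of the hexahedra—has exactly two $2$-cofaces in $\partial P(I)$; this is read off directly from the structuring elements in $B_P$ (Figs.~\ref{rojo-azul},~\ref{verde-rojo},~\ref{negro-verde}), which encode precisely the boundary-face incidences, and from the fact that $B_P$ is configuration-independent. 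For (E2): for each vertex of $\partial P(I)$ lying in a repaired region one checks that the link is a single topological circle; the worst cases are the old critical vertices themselves (now ``blown up'' into the eight vertices of $c(v)$) and the new vertices introduced on former edges and squares, and for each of the eleven critical configurations of Fig.~\ref{critical_conf} one confirms, using the explicit local pictures, that the formerly pinched neighbourhood has been separated into a genuine disk. Equivalently, one argues that after repair no structuring element from $B^{critical}_Q$ (Prop.~\ref{prop:critical}, suitably updated to the $B_P$ setting) fits anywhere, i.e. there is no critical vertex in $\partial P(I)$.

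\textbf{Main obstacle.} The routine part is the enumeration: eleven critical $8$-cube configurations, combined with the cases in Procedure~\ref{method} according to how many of the relevant vertices are themselves critical (giving the $p_1,\dots,p_4$ / small-cube alternatives and the $22$ hexahedra). The genuinely delicate point—and the step I expect to be hardest—is establishing \emph{global} consistency: showing that when several critical vertices are close together, the independently chosen replacements on adjacent edges, squares and cubes still fit together into a single polyhedral complex (intersections of cells are cells) whose boundary is a manifold, with no new critical vertex created at the seams. This is where the disjointness result for $g_P$, the partition $\mathbb{Z}^3=S_0\cup S_1\cup S_2\cup S_3$ used in its proof, and the careful ``set-minus-$N^1(\cdot)$'' clauses in Procedures~\ref{methodExtended} and~\ref{method} do the real work, and I would spend most of the proof making that gluing argument precise, most likely by exhibiting $B_P$ explicitly and checking that it is a valid set of structuring elements (Def.~\ref{ECM}) for the complex produced by Procedure~\ref{method} regardless of the input configuration, so that incidences—and hence (E1)--(E2)—can be verified uniformly.
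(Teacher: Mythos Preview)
Your overall architecture—prove homotopy equivalence and well-composedness separately, each by a local, cell-by-cell case analysis over the finite list of replacements in Procedure~\ref{method}—is exactly the paper's approach, and your discussion of (E1)--(E2) and of the ``global consistency'' issue (which the paper handles as a preliminary check that $P(I)$ is a genuine complete polyhedral complex) is on target.

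There is, however, a concrete error in your homotopy-equivalence argument. You assert that ``the new collection of cells has underlying space a closed subset of the old cell $|\sigma|$'' and that $|P(I)|\subseteq|Q(I)|$, then retract $|Q(I)|$ onto $|P(I)|$. This is false already for $\sigma=v$ a critical vertex: the replacement $c(v)$ is a full $3$-dimensional size-$\tfrac12$ cube, certainly not a subset of the point $\{v\}$, and since $c(v)$ occupies all eight octants around $v$—including those corresponding to unit cubes \emph{not} in $Q(I)$ (there are always some, by the critical configurations of Fig.~\ref{critical_conf})—we have $|P(I)|\not\subseteq|Q(I)|$. The same happens for the pyramids $k(v,w)$ and the polyhedra $p_i$, which spill into empty octants. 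The inclusion, if any, goes the other way, and the paper's homotopy equivalence runs in the opposite direction: it is the map $|P(I)|\to|Q(I)|$ that collapses each $c(v)$ to $v$ and, compatibly, each $k(v,w)$, $c(\tfrac{v+w}{2})$, $p_i$, and hexahedron onto the original edge, square, and unit cube. Once you reverse the direction your gluing/compatibility argument (local collapses agreeing on shared faces) is exactly right and is what the paper does.
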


\begin{proof}
Let us prove that the polyhedral complex $P(I)$ is well defined:
\begin{itemize}
\item By construction, all the faces of each cell in $P(I)$ are also in $P(I)$.
\item  
The intersection of any two cells of $P(I)$ is also a cell of $P(I)$, 
since during Proc.~\ref{method}, each critical vertex $v$ is replaced by $c(v)$ 
(together with all its faces)
in $P(I)$ and each cell in  $St \{v\}$ is replaced by a new polyhedron (together with its faces).
Besides, all the faces of the polyhedron added at each step are used in the next steps as faces of the new added polyhedra. 
Finally, an hexahedron of Fig.~\ref{hexahedron} substitutes the corresponding cube in $Q(I)$, whose faces are also faces of either previous cubes of $Q(I)$ or new polyhedra in $P(I)$.
\item $P(I)$ is complete  since the new added cells are always polyhedra together with all their faces.
\end{itemize}
Let us prove now that a homotopy equivalence (see \cite{h02})  from $P(I)$ to $Q(I)$ can be constructed.
The key point is that, for each critical vertex $v\in Q(I)$, the cube $c(v)\in P(I)$ is homotopy equivalent to ${v}$. The homotopy equivalence is given by the projection of all the points in $c(v)$ onto  $v$ (in fact, it is a deformation retraction).
The projection of the vertices of each cube $c(v)$ to  $v$ leads, in a natural way, to homotopy equivalences between the rest of the new constructed polyhedra in $P(I)$ and the corresponding cells of $Q(I)$:
\begin{itemize}
\item The pyramid $k(v,w)$
is  homotopy equivalent  to the edge $e(v,w)$, by the continuous function that maps the vertices of $s(v,w)$ 
(where $s(v,w)$ is the square face of $c(v)$ whose barycenter lies on $e(v,w)$)
  to $v$,  and $w$ to $w$; and it is extended continuously to all the points of the polyhedron;
\item  Analogously, the cube $c(\frac{v+w}{2})$ 
is  homotopy equivalent  to the edge $e(v,$ $w)$, by the continuous function that maps
$s(v,w)$  to $v$, and the ones of $s(w,v)$   to $w$.
\item  Each polyhedron $p_i(v,v_1,v_2,v_3)$  (see Fig.~\ref{poligonos5}) is homotopy equivalent to the corresponding square with vertices $v,v_1,v_2,v_3$ by the continuous function that maps the vertices of $c(w)$ to $w$ for $w=v,v_1,v_2,v_3$ if $w$ is critical, or $w$ to $w$ if not. It is extended continuously to the rest of the points of the polyhedron
\item Each hexahedron of quadrilateral faces (see Fig.~\ref{hexahedron}) is
homotopy
 equivalent to the corresponding unit cube obtained after 
mapping each size$-\frac{1}{2}$ cube $c(v)$ corresponding to a critical vertex $v\in Q(I)$, to $v$. 
\end{itemize}
Finally the polyhedral complex $P(I)$ is well-composed. Observe that after performing Proc.~\ref{method}, we have replaced  each critical vertex $v\in Q(I)$ by a cube $c(v)\subseteq P(I)$, and the set $St\{v\}\subseteq Q(I)$ by  $St c(v)\subseteq P(I)$ (see, for example, 
Fig.~\ref{critical_conf} and Fig.~\ref{stars}).
We have to prove that conditions $E1$ and $E2$ of Prop.~\ref{E1E2} are satisfied:
\begin{itemize}
\item[(E1)] Any edge $e=e(w_1,w_2)$ with endpoints $w_1$ and $w_2$ in $\partial P(I)$ has exactly two $2$-cofaces:
\begin{itemize}
\item If $e$ was also an edge in $\partial Q(I)$ satisfying $E1$, then neither $w_1$ nor $w_2$ were critical vertices and hence, $e$ will have as $2$-cofaces either square faces (that remain the same from $\partial Q(I)$) or quadrilateral faces of the polyhedra d) or e) of Fig.~\ref{poligonos5}.
\item If $e$ is an edge of a cube $c(v)$ for some critical vertex $v$, then $e$ has as $2$-cofaces a square face of the cube $c(v)$ and either a triangular face of a pyramid $k(v,w)$ for some non critical vertex $w$, or a square face of a cube $c(\frac{v+w}{2})$ for some critical vertex $w$.
\item If $e$ is an edge of a pyramid $k(v,w_2)$ for some critical vertex $v$, then either $e$ is shared by exactly two triangular faces of the pyramid, or it is shared by a triangular face of the pyramid and a quadrilateral face of a polyhedron $p_i(v,v_1, v_2,v_3)$ for some $i=1,2,3,4$, and $w_1=v_j$ for some $j=1,2,3$. See Fig.~\ref{poligonos5}.(d-g).  
\end{itemize}
\item[(E2)] For any vertex $w_1\in\partial P(I)$, $Lk\,\{w_1\}$ in $\partial P(I)$ has exactly one connected component: 
\begin{itemize}
\item If $w_1$ was also a vertex in $\partial Q(I)$ (that is, $w_1$ was a non-critical vertex and hence, satisfied condition $E2$), then $Lk\{w_1\}$ was a set of edges and vertices in $\partial Q(I)$, that are faces of $2-$cofaces of $w_1$. In the case that any of those $2-$cofaces were replaced by a polyhedron $p_i(v,v_1, v_2,v_3)$ for some $i=1,2,3,4$, some critical vertex $v$ and such that $w_1=v_j$ for some $j=1,2,3$, only one of the quadrilateral faces $f$ of  $p_i(v,v_1, v_2,v_3)$ would lie on $\partial P(I)$ and the edges and vertices in $Lk\{w_1\}$ in $\partial Q(I)$ would be replaced  by the edges and vertices of $c(v)$ and $k(v,w_1)$ shared with $f$. The way of construction of the new polyhedra guarantees that $Lk\{w_1\}$ in $\partial P(I)$ is still connected.
\item If $w_1$ is a vertex of a cube $c(v)$ for some critical vertex $v$ of $Q(I)$, then, by construction, the $2$-cofaces of $w_1$ are either square faces of $c(v)$, or triangular faces of a pyramid $k(v,w)$, for some non-critical vertex $w$, or square faces of an adjacent cube $c(\frac{v+w}{2})$ for some other critical vertex $w$, or quadrilateral faces of the polyhedra $p_i(v,v_1, v_2,v_3)$ showed in Fig.~\ref{poligonos5}. Then,  $Lk\,\{w_1\}$ in $\partial P(I)$, as one can guess from Fig.~\ref{poligonos5}, has exactly one connected component.
\end{itemize}
\end{itemize}
\end{proof}

\begin{figure}[t!]
	\centering
		\includegraphics[width=13.5cm]{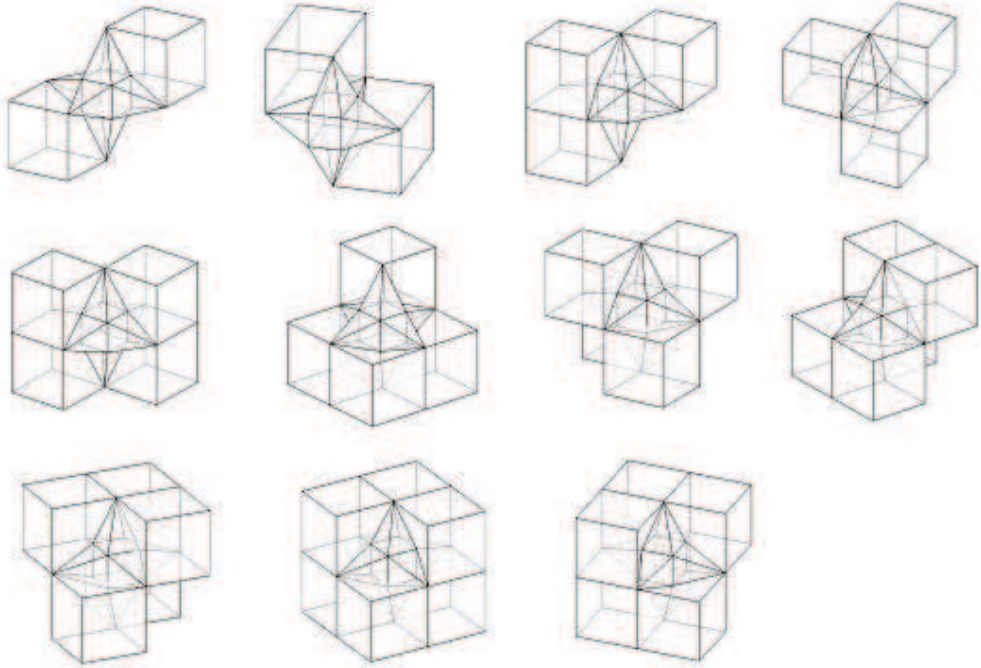}
	\caption{The well-composed polyhedral complexes obtained after performing Proc.~\ref{methodExtended} and Proc~\ref{method} \bf{only on the central vertex} for 
  the 3D complete cubical complexes showed in Fig.~\ref{critical_conf}.}
\label{stars}
\end{figure}

\begin{figure}[t!]
	\centering
		\includegraphics[width=13.5cm]{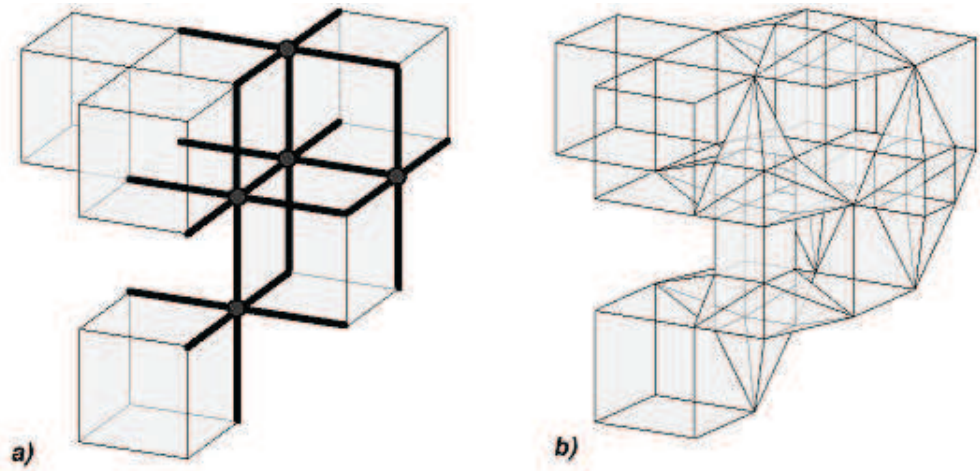}
	\caption{ a) A cubical complex $Q(I)$ associated to a 3D image composed by $6$ voxels.  b) The  obtained 3D well-composed polyhedral complex $P(I)$.}
	\label{ejemplo1}
\end{figure}

\begin{figure}[t!]
	\centering
		\includegraphics[width=13.5cm]{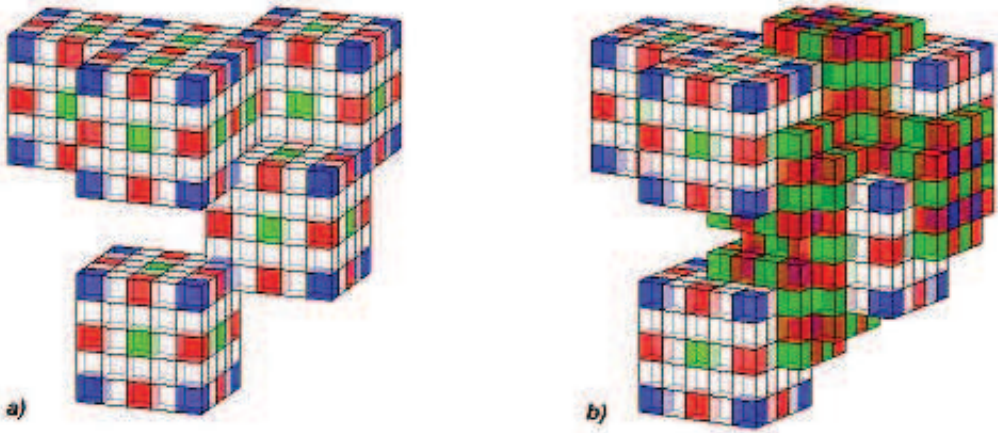}
	\caption{a)  The color values  in $E_Q$ where $Q(I)$ is the cubical complex showed in Fig.~\ref{ejemplo1}.a.
 b) The color values  in $E_P$ after performing the repairing process explained in Proc.~\ref{methodExtended}.}
	\label{ejemplo}
\end{figure}

In Fig.~\ref{ejemplo1}.a,  there is a simple example of a cubical complex $Q(I)$ associated to a 3D  image composed by $6$ voxels.  
The color values of the ECM representation $E_Q$ of $Q(I)$ is showed in Fig.~\ref{ejemplo}.a.  
The color values  in $E_P$ after performing the repairing process explained in Proc.~\ref{methodExtended}  is showed in 
Fig.~\ref{ejemplo}.b.  
In Fig.~\ref{ejemplo1}.b, there is the  3D
well-composed polyhedral complex $P(I)$ obtained following Proc.~\ref{method}. 

\begin{theorem} The triple $E_P=(h_P,g_P,B_P)$ is an ECM representation of $P(I)$.
\end{theorem}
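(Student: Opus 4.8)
The plan is to verify the three conditions of Definition~\ref{ECM} for $E_P=(h_P,g_P,B_P)$, after first fixing the set $B_P$. I would take $B_P$ to be the union of the family $B_Q$ of Prop.~\ref{EQ} with the families $B_A,B_B,B_D,B_E,B_F,B_G$ attached to the building blocks in Figs.~\ref{poligonos5} and~\ref{nuevospolyhedroscubitos}, together with the structuring elements (of the three shapes displayed in Figs.~\ref{rojo-azul}--\ref{negro-verde}) read off from the $22$ hexahedra of Fig.~\ref{hexahedron}; since each hexahedron is combinatorially a cube, these add at most finitely many new elements, and in fact $B_P$ is a fixed finite set that does not depend on $I$ (the assertion advertised in the introduction). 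Every element of $B_P$ has the shape allowed by Def.~\ref{ECM}: the value $i$ at the origin, the value $i-1$ at the few positions that will locate the $(i-1)$-faces, and $-1$ at the remaining points of its (small) domain.

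We already know from Theorem~\ref{well-composedP} that $P(I)$ is a complete polyhedral complex; it remains to match its cells bijectively with the points of $D_P=\{p:g_P(p)\neq-1\}$. By the decomposition of $\mathbb{Z}^3$ into the pairwise disjoint sets $S_0,S_1,S_2,S_3$ used in the proof that $g_P$ is well defined, the color-changing neighbourhoods $N^{\leq 1}$ of the points altered by Proc.~\ref{methodExtended} are pairwise disjoint, so $h_P$ is obtained by patching $h_Q$ on the untouched part of $D_Q$ with the bijections $h_\alpha$ on the modified blocks. The representative of a cell $\sigma$ is the nearest integer point to $4r_\sigma$, an invariant of $\sigma$, so a cell shared by two adjacent blocks, or by a block and an untouched cube, is given the same representative from both sides; the case distinctions inside Proc.~\ref{method} (non-critical versus critical endpoint of an edge, the sub-cases for a square, the hexahedron attached to each cube) are precisely what stops two blocks from assigning different cells to one point. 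Hence $h_P$ is injective, and it is surjective and one-to-one onto $P(I)$ because $P(I)$ was \emph{defined} in Proc.~\ref{method} as the untouched cells of $Q(I)$ together with the cells of the inserted blocks. Finally $g_P(p)=\dim h_P(p)$ on $D_P$ and $g_P\equiv-1$ off $D_P$ are immediate: Proc.~\ref{methodExtended} writes at each modified point exactly the dimension of the cell it will represent, and each $E_\alpha$ satisfies $g_\alpha=\dim\circ\,h_\alpha$ by construction.

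The substance of the theorem is the structuring-element condition. Fix an $i$-cell $\sigma\in P(I)$ with representative $p_\sigma$. If $\sigma$ is an untouched cell of $Q(I)$, then its closure contains no critical vertex (every cube, square or edge meeting a critical vertex is replaced in Proc.~\ref{method}), so by Remark~\ref{remark:vertice} and the disjointness above, $g_P$ and $g_Q$ coincide on the domain that $b_\sigma$ reads around $p_\sigma$; thus the element $b_\sigma\in B_Q\subseteq B_P$ supplied by the proof of Prop.~\ref{EQ} still fits around $p_\sigma$ and still detects exactly the $(i-1)$-faces of $\sigma$. If instead $\sigma$ lies in an inserted block $X$ (a cube $c(v)$, a pyramid $k(v,w)$, a polyhedron $p_i$, or a hexahedron), I take $b_\sigma$ to be the element of $B_X\subseteq B_P$ prescribed by the ECM representation $E_X$; what has to be shown is that $g_P$ agrees with $g_X$ on the domain of $b_\sigma$ translated to $p_\sigma$, i.e.\ that the repairs performed around the \emph{other} critical vertices and their cofaces write no conflicting value there. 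Granting this, the three bullets of Def.~\ref{ECM} for $b_\sigma$ hold because they hold inside $E_X$.

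The step I expect to be the real obstacle is precisely this last agreement, because it is not merely a matter of disjointness: a critical vertex $v$ sits at a position $p_v=r+(\pm2,\pm2,\pm2)$ relative to the representative $r$ of an incident cube, so the block $N^{\leq 1}(p_v)$ modified around $v$ does intrude into the relevant neighbourhood of $r$ --- one must use that the cube at $r$ is itself replaced by the appropriate hexahedron, whose ECM representation already encodes the pushed-in vertices (for instance the inner vertex of $c(v)$ has representative $4v-(1,1,1)=r+(1,1,1)$, a $0$-cell shared by $c(v)$ and that hexahedron and coloured $0$ by both). Via the $S_0,\dots,S_3$ decomposition and Remark~\ref{remark:vertice}, the verification then reduces to the finitely many ways in which building blocks can abut along a common cell, governed by the eleven critical configurations of Figs.~\ref{criticalpoints} and~\ref{stars} and by the hexahedron list of Fig.~\ref{hexahedron}; for each such abutment one checks directly from Figs.~\ref{poligonos5}, \ref{hexahedron} and~\ref{nuevospolyhedroscubitos} that the two candidate local pictures coincide. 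The disjointness lemma already proved for $g_P$ is what keeps this case analysis finite and bounded; beyond that it is a routine if lengthy inspection of the figures.
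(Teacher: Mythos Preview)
Your proposal is correct and follows essentially the same approach as the paper: verify that the colour values written by Procedure~\ref{methodExtended} are exactly the dimensions of the cells assigned by Procedure~\ref{method}, and take $B_P$ to be the union of the structuring elements supplied by the building blocks $E_\alpha$. The paper's own proof is a brief sketch of precisely this; you go further by explicitly checking bijectivity of $h_P$ via the $S_0,\dots,S_3$ decomposition and by isolating the one genuinely delicate point---that the structuring element attached to a cell of one building block must agree with $g_P$ even where a neighbouring block has also written colours---which the paper simply asserts (``the color-values \dots\ coincide with the color-values computed in Proc.~\ref{methodExtended}'') without spelling out.
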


\begin{proof}
Observe that  
$P(I)$ is constructed by replacing 
each cell in $Q(I)$ which is a coface of a critical vertex, by a particular polyhedron defined above. Observe that the color-values in the ECM representations $E_{\alpha}$, $\alpha=A,B,D,E$, $F$, $G$, of each new added polyhedron in $P(I)$, used for computing the ECM representation for $P(I)$,  coincides with the color-values computed in Proc.~\ref{methodExtended}.
For this reason, for any $p\in D_P$, $g_P(p)=dim(h_P(p))$.
  Finally, 
the new structuring elements  stored in $B_P$ are obtained from the ECM representation of each new added polyhedra in Proc.~\ref{method}.
\end{proof}

\section{Conclusion and Future Work}\label{conclusions}

In this paper, we have presented a representation scheme called ECM representation for storage and manipulate the cells of both an initial cubical complex $Q(I)$ associated to a 3D digital image $I$ and a 3D well-composed polyhedral complex homotopy equivalent to it.  
Using this scheme, we have presented a method for constructing the complex $P(I)$.

 Although the representation of the final polyhedron is $64=4^3$ times bigger than the input, plus
the space required to encode voxel colors, there are several advantages when using our approach as we will see below.
In order to compute properties on a polyhedral complex $K$  such as homology, we only need to store the map $g_K:\mathbb{Z}^3\to \mathbb{Z}$ and the structuring elements $B_K$. However, the set $B_P$ remain the same for any ECM representation $E_P$ of $P(I)$ obtained from a cubical complex $Q(I)$ after performing Proc.~\ref{method}. Therefore, Proc.~\ref{methodExtended} provides enough information to compute the homology of $P(I)$.

%The repairing process can be performed directly on the ECM representation of  the cubical complex $Q(I)$ to get a new ECM representation that encodes precisely the well-composed polyhedral complex $P(I)$ described before. 

The overall method is linear in the number of cells of the initial cubical complex $Q(I)$. In fact, 
the method is linear in the number of critical vertices of $Q(I)$ and only the points in $N^{\leq 2}(p_v)$ (which represent the cells in 
$St\{v\}$ for any critical vertex $v$) are modified. 
Observe that the representation scheme could be improved, considering only the critical vertices of $Q(I)$ and their cofaces. 
Besides, we think that voxel colors may not be needed since the 
dimension of a cell $\sigma\in P(I)$ represented by a point $p\in \mathbb{Z}^3$ may be deduced from the coordinates of $p$.
We let these last tasks for future work.

Finally, this way of representation provides a fast access to the cells of $Q(I)$ and $P(I)$, in terms of the coordinates used to codify it, as well as an efficient way to get all the boundary faces of each cell. And what is more important, to obtain all this information we do not need to build $P(I)$.

Future work could be  to extend our method to  $n$D and to adapt the existing algorithms developed in the context of well-composed images to well-composed polyhedral complexes.

\noindent \textbf{Acknowledgments.} We wish to thank the anonymous referees for their helpful suggestions, which significantly improved the exposition.

\end{document}